\documentclass[twoside]{article}

\usepackage[preprint]{twocolpreprint}

\usepackage[round]{natbib}

\usepackage{amsmath,amssymb,mathtools}
\usepackage{balance}
\usepackage{microtype}
\usepackage[utf8]{inputenc}
\usepackage[T1]{fontenc}
\usepackage{url}
\usepackage{booktabs}
\usepackage{amsfonts}
\usepackage{nicefrac}
\usepackage{microtype}
\usepackage{xcolor}
\usepackage{algorithm}
\usepackage{wrapfig} %
\usepackage{algpseudocode}
\usepackage{amsmath,amssymb,amsthm,mathtools}

\usepackage{hyperref}
\usepackage{placeins}

\newtheorem{theorem}{Theorem}
\newtheorem{lemma}[theorem]{Lemma}

\newcommand{\R}{\mathbb{R}}
\newcommand{\E}{\mathbb{E}}
\newcommand{\cA}{\mathcal{A}}
\newcommand{\cD}{\mathcal{D}}
\newcommand{\cR}{\mathcal{R}}
\newcommand{\cO}{\mathcal{O}}

\begin{document}

\runningtitle{Accelerating Linear Bandits with Persistent Candidate Sets}
\runningauthor{Tong, Korovaitceva, Sheshukova, Samsonov, Komiyama}

\twocolumn[

\aistatstitle{Accelerating Linear Bandits with Persistent Candidate Sets}

\aistatsauthor{Bingkui Tong$^{1}$ \quad Daria Korovaitceva$^{2}$ \quad Marina Sheshukova$^{2}$ \\[2pt] \bfseries Sergey Samsonov$^{2}$ \quad Junpei Komiyama$^{1,3}$}

\aistatsaddress{$^{1}$Mohamed bin Zayed University of Artificial Intelligence \\ $^{2}$HSE University \qquad $^{3}$RIKEN AIP \\[3pt]
\small\texttt{bingkui.tong@mbzuai.ac.ae} \quad \texttt{junpei@komiyama.info} \quad \texttt{\{dkorovaitseva,msheshukova,svsamsonov\}@hse.ru}}]

\begin{abstract}
Exhaustive optimistic scoring can make action selection costly in linear bandits with a large fixed set of $K$ actions, requiring $\mathcal{O}(K)$ score evaluations per round. We introduce Persistent Candidate LinUCB (\textsc{PC-LinUCB}), which maintains $k$ active actions and samples $q$ challengers uniformly from the remaining $K-k$ actions at each round. By scoring only these $k+q$ candidates, \textsc{PC-LinUCB} reduces the number of score evaluations per round to $\mathcal{O}(k+q)$.
We establish problem-independent and problem-dependent high-probability regret bounds consisting of a standard linear-bandit learning term and an additive discovery cost of order $\mathcal{O}\left((K-k)\log(K-k)/q\right)$ at fixed confidence.
Notably, this additional cost does not grow with the horizon.
For fixed gaps and sufficiently long horizons, we establish an $\Omega\left((K-k)/q\right)$ regret lower bound under uniform challenger sampling, showing that partial candidate access itself imposes an unavoidable regret cost.
Experiments show that \textsc{PC-LinUCB} substantially accelerates LinUCB while maintaining comparable regret, and that the persistent candidate mechanism can extend beyond LinUCB.
\end{abstract}

\section{Introduction}
\label{sec:introduction}

Bandit algorithms provide a general framework for sequential decision making under uncertainty, where the goal is to find the action (arm) of the largest expected reward.
The central challenge is balancing exploration of uncertain actions with exploitation of actions that appear promising. For a finite action set of size $K$, direct implementations of classical algorithms such as upper confidence bound (UCB) methods~\citep{LAI19854}, Thompson sampling~\citep{thompson1933likelihood,russo2018tutorial}, and LinUCB~\citep{li2010contextual} typically evaluate or compare all available actions at each round, making the action-selection cost scale linearly with $K$. This dependence can become prohibitively expensive in large-scale applications such as recommendation systems and online advertising, where the number of candidate actions can reach millions~\citep{sen2021topk,zhu2022largeaction}.

We study this large-$K$ computational challenge in the setting of linear bandits, where each action is represented by a feature vector in $\mathbb{R}^d$ and its mean reward is determined by an unknown parameter~\citep{dani2008stochastic,abbasi2011improved}. For a fixed action set of size $K$, a direct implementation of LinUCB with a dense confidence matrix requires $\mathcal{O}(d^2)$ computation to evaluate the uncertainty of one action, where $d$ is the feature dimension, leading to an $\mathcal{O}(Kd^2)$ scoring cost per round.

Existing work on computationally efficient linear bandits broadly follows two directions: reducing the cost of scoring each action, or reducing the number of actions that need to be evaluated. The former reduces the dimension-dependent computation through techniques such as random projection, matrix sketching, and low-rank approximation~\citep{yu2017cbrap,kuzborskij2019sketching,chen2020efficient,shustova2025scalable,wen2026revisiting}. The latter reduces the dependence on the action-set size through sublinear search, structured action selection, or subset-based approaches~\citep{jun2017scalable,yang2022sublinear,sen2021topk,zhu2022largeaction,hanna2023softbatch,lattimore2020features,bayati2020greedy,purohit2025case}.

In this work, we focus on reducing the number of actions evaluated. Reducing the number of scored actions, however, restricts access to the action set: an action excluded from the current candidate set cannot be selected at that round. Constructing an effective subset therefore requires balancing retention and discovery: promising actions should be retained once identified, while actions outside the current subset must still have opportunities to enter.

The optimistic scores of LinUCB provide a natural signal for identifying actions that remain potentially competitive. Motivated by this observation, we propose Persistent Candidate LinUCB (\textsc{PC-LinUCB}), which combines a persistent candidate set with UCB scores in the following way. At each round, it combines an active set of $k$ actions with $q$ challengers sampled uniformly from the remaining actions. A key innovation is our use of \emph{clipped scores} to guide both action selection and candidate retention: each candidate's score is updated to the minimum of its previous value and the newly evaluated LinUCB score. These scores are non-increasing while preserving optimism on the standard confidence event. Thus, once an action's clipped score falls below the optimal mean reward, it remains below that threshold, whereas an optimal action's clipped score never does. 
The action with the largest clipped score is selected, and the $k$ candidates with the largest clipped scores form the next active set. Scoring only these candidates reduces the number of score evaluations per round from $K$ to $k+q$, substantially lowering the cost when $k+q\ll K$.

By using the aforementioned property of the clipped UCB scores, we analyze and derive both problem-independent and problem-dependent regret upper bounds. In each case, the bound separates the standard linear-bandit learning term \citep{abbasi2011improved} from an additive discovery cost of order
\[
\cO\left(
    \frac{K-k}{q}\log(K-k)
\right).
\]
This term captures the time required for random challengers to expose an action that remains plausible for optimality and does not grow with the horizon at fixed confidence. We also establish a regret lower bound under uniform challenger sampling. Even if the identity of an initially inactive optimal action is known in advance, the policy cannot select it before it appears among the random challengers. For fixed-gap instances and sufficiently long horizons, this partial-access restriction alone incurs expected regret of order $\Omega((K-k)/q)$ when $q\ll K-k$.

We compare \textsc{PC-LinUCB} with five baselines on controlled synthetic problems and four large action sets derived from real data, and further apply the persistent candidate mechanism to SpectralUCB~\citep{valko2014spectral}. The results show substantial reductions in end-to-end wall-clock time with regret comparable to exhaustive LinUCB, alongside distinct time--regret tradeoffs for methods based on infrequent switching, matrix compression, and elimination. Ablations examine the total scoring budget and its allocation between persistent actions and random challengers. The SpectralUCB application further shows that the persistent candidate mechanism can extend beyond LinUCB.

Our contributions are as follows.
\begin{itemize}
    \item We introduce a persistent candidate mechanism guided by clipped scores for linear bandits with large fixed action sets. The clipped scores preserve optimism while remaining non-increasing and guide both action selection and candidate retention, enabling our analysis of partial candidate access.

    \item We establish problem-independent and problem-dependent regret upper bounds that separate the cost of learning the linear reward model from an additive cost induced by partial access to the action set.

    \item We prove a complementary regret lower bound under uniform challenger sampling, showing that partial candidate access itself imposes an unavoidable regret cost of order $\Omega((K-k)/q)$.

    \item We empirically demonstrate that \textsc{PC-LinUCB} substantially reduces computation while maintaining comparable regret, examine the tradeoff between candidate budget and performance, and show that the persistent candidate mechanism can extend beyond LinUCB through an application to SpectralUCB.
\end{itemize}

\section{Problem Formulation and Algorithm}
\label{sec:problem-algorithm}

\subsection{Problem Setting}

We consider a stochastic linear bandit with a fixed finite action set indexed by $[K]=\{1,\ldots,K\}$. Each action $i$ has a known feature vector $x_i\in\R^d$ that remains unchanged throughout the horizon. Its mean reward is determined by a shared unknown parameter $\theta^\star\in\R^d$:
\begin{equation}
    \mu_i = \langle x_i,\theta^\star\rangle,
    \qquad i\in[K].
    \label{eq:linear-mean}
\end{equation}
We assume $\lVert x_i\rVert_2\le L$, $\lVert\theta^\star\rVert_2\le S$, and $\mu_i\in[-1,1]$ for every action. At round $t$, the learner selects $I_t\in[K]$ and observes
\begin{equation}
    Y_t = \mu_{I_t}+\eta_t,
    \label{eq:reward-model}
\end{equation}
where $\eta_t$ is conditionally $R$-sub-Gaussian given the history before round $t$ and the selected action $I_t$. Let
\[
    i^\star\in\arg\max_{i\in[K]}\mu_i,
    \qquad
    \mu^\star=\mu_{i^\star},
    \qquad
    \Delta_i=\mu^\star-\mu_i.
\]
We measure performance by cumulative pseudo-regret:
\begin{equation}
    R_T=\sum_{t=1}^T\Delta_{I_t}.
    \label{eq:pseudo-regret}
\end{equation}

\subsection{Confidence Sets and Optimistic Scores}
\label{subsec:optimistic-scores}

We use the regularized least-squares estimator and ellipsoidal confidence construction of OFUL \citep{abbasi2011improved} to define the optimistic scores used by \textsc{PC-LinUCB}. For a regularization parameter $\lambda>0$, define
\vspace{-1em}
\begin{equation}
    \small
    V_t=\lambda I_d+\sum_{s=1}^tx_{I_s}x_{I_s}^{\top},\quad
    b_t=\sum_{s=1}^t x_{I_s}Y_s,\quad
    \widehat\theta_t=V_t^{-1}b_t.
    \label{eq:ridge-estimator}
\end{equation}
The initial values are $V_0=\lambda I_d$, $b_0=0$, and $\widehat\theta_0=0$. For a known horizon $T$ and confidence level $\delta\in(0,1)$, set
\begin{equation}
    \beta
    =
    R\sqrt{
        d\log\!\left(1+\frac{TL^2}{\lambda d}\right)
        +2\log\frac{1}{\delta}
    }
    +\sqrt{\lambda}S.
    \label{eq:confidence-radius}
\end{equation}
The associated confidence set is
\[
    \mathcal{C}_t
    =
    \left\{
        \theta\in\R^d:
        \lVert\theta-\widehat\theta_t\rVert_{V_t}\le\beta
    \right\},
\]
where $\lVert z\rVert_M=\sqrt{z^\top Mz}$. The self-normalized confidence bound gives
\begin{equation}
    \small
    \mathbb{P}(\mathcal{E}_\delta)\ge 1-\delta,
    \qquad
    \mathcal{E}_\delta
    =
    \{\theta^\star\in\mathcal{C}_t\text{ for all }0\le t\le T\}.
    \label{eq:confidence-event}
\end{equation}

For each action $i$, define the upper and lower confidence scores
\begin{equation}
    \small
    U_t(i)
    =
    \langle x_i,\widehat\theta_t\rangle
    +\beta\lVert x_i\rVert_{V_t^{-1}},
    L_t(i)
    =
    \langle x_i,\widehat\theta_t\rangle
    -\beta\lVert x_i\rVert_{V_t^{-1}},
    \label{eq:ordinary-confidence-scores}
\end{equation}
with confidence width
\begin{equation}
    w_t(i)=U_t(i)-L_t(i)=2\beta\lVert x_i\rVert_{V_t^{-1}}.
    \label{eq:confidence-width}
\end{equation}
On $\mathcal{E}_\delta$, $L_t(i)\le\mu_i\le U_t(i)$ holds simultaneously for every action and round. Our algorithm evaluates $U_t(i)$ only for actions in the current candidate set.

\subsection{Persistent Candidate LinUCB}
\label{subsec:persistent-candidates}

Fix an active-set size $k$ and a challenger budget $q$ such that
\[
    1\le k<K,
    \qquad
    1\le q\le K-k.
\]
The algorithm maintains an active set $\cA_t\subseteq[K]$ of size $k$. Choose any initial set $\cA_0$ with $|\cA_0|=k$ and initialize each action's clipped score as $\widetilde U(i)=U_0(i)$.

At round $t$, define the inactive set
\begin{equation}
    \mathcal{B}_{t-1}=[K]\setminus\cA_{t-1}.
    \label{eq:inactive-set}
\end{equation}
The algorithm samples $q$ challengers uniformly from $\mathcal{B}_{t-1}$ to form the challenger set $\cR_t$. Their union with the active set is the candidate set:
\begin{equation}
    \cD_t=\cA_{t-1}\cup\cR_t,
    \qquad |\cD_t|=k+q.
    \label{eq:candidate-set}
\end{equation}

Algorithm~\ref{alg:persistent-candidate} gives the complete procedure.

\begin{algorithm}[t]
\caption{Persistent Candidate LinUCB (\textsc{PC-LinUCB})}
\label{alg:persistent-candidate}
\begin{algorithmic}[1]
\Require Features $\{x_i\}_{i=1}^K$, horizon $T$, $k$, $q$, $\lambda$, $\beta$
\State $V_0\gets\lambda I_d$, $b_0\gets0$, $\widehat\theta_0\gets0$
\State Choose $\cA_0\subseteq[K]$ with $|\cA_0|=k$
\State $\widetilde U(i)\gets U_0(i)$ for every $i\in[K]$
\For{$t=1,\ldots,T$}
    \State $\mathcal{B}_{t-1}\gets[K]\setminus\cA_{t-1}$
    \State Draw $\cR_t$ uniformly among the $q$-subsets of $\mathcal{B}_{t-1}$
    \State $\cD_t\gets\cA_{t-1}\cup\cR_t$
    \For{$i\in\cD_t$}
        \State $U_{t-1}(i)\gets
        \langle x_i,\widehat\theta_{t-1}\rangle
        +\beta\lVert x_i\rVert_{V_{t-1}^{-1}}$
        \State $\widetilde U(i)\gets\min\{\widetilde U(i),U_{t-1}(i)\}$ \label{alg:refresh}
    \EndFor
    \State $I_t\in\arg\max_{i\in\cD_t}\widetilde U(i)$
    \State Play $I_t$ and observe $Y_t$
    \State Let $\cA_t$ be the $k$ actions in $\cD_t$ with largest
    $\widetilde U(i)$
    \State $V_t\gets V_{t-1}+x_{I_t}x_{I_t}^{\top}$
    \State $b_t\gets b_{t-1}+x_{I_t}Y_t$
    \State $\widehat\theta_t\gets V_t^{-1}b_t$
\EndFor
\end{algorithmic}
\end{algorithm}

\paragraph{Computational cost.} With a dense inverse covariance matrix, evaluating one optimistic score costs $\cO(d^2)$. Scoring only the $k+q$ actions in $\cD_t$ reduces the per-round scoring cost from $\cO(Kd^2)$ to $\cO((k+q)d^2)$. Model updates and candidate maintenance add no further dependence on $K$ per round.

The parameters $k$ and $q$ control retention and discovery, respectively: increasing $k$ retains more promising actions, while increasing $q$ searches the inactive set more quickly. The following sections bound the regret when $k+q\ll K$ and establish the discovery cost imposed by this candidate-access mechanism.

\section{Regret Analysis}
\label{sec:regret-analysis}

\subsection{Main Results}

We assume that at least one action is suboptimal and define $\Delta_{\min}=\min\{\Delta_i:\Delta_i>0\}$; otherwise, every action is optimal and the regret is trivially zero.

\begin{theorem}[Problem-independent regret]
\label{thm:problem-independent-regret}
Fix $T\ge 2$, $\delta,\delta'\in(0,1)$, $1\le k<K$, and $1\le q\le K-k$, and let $\beta$ be defined by Eq.~\eqref{eq:confidence-radius}. Then, with probability at least $1-\delta-\delta'$, Algorithm~\ref{alg:persistent-candidate} satisfies
\begin{equation}
    \begin{aligned}
        R_T
        &\le
        2\max\{\beta,1\}
        \sqrt{
            2Td\log\!\left(1+\frac{TL^2}{\lambda d}\right)
        }\\
        &\quad+
        2\left\lceil
            \frac{K-k}{q}
            \left(
                \log(K-k)+\log\frac{1}{\delta'}
            \right)
        \right\rceil.
    \end{aligned}
    \label{eq:problem-independent-regret}
\end{equation}
\end{theorem}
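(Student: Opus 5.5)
The proof splits the rounds into \emph{good} rounds and \emph{bad} rounds and works on $\mathcal{E}_\delta$ from Eq.~\eqref{eq:confidence-event}. The linear-bandit term comes from the good rounds and the discovery term from the bad rounds. First record two properties of the clipped scores. Because of line~\ref{alg:refresh}, $\widetilde U(i)$ is non-increasing in $t$. On $\mathcal{E}_\delta$ it is optimistic: $\widetilde U(i)=\min_s U_s(i)\ge \mu_i$ over the rounds $s$ at which $i$ was scored. Hence $\widetilde U(i^\star)\ge\mu^\star$ at all times. Call an action \emph{alive} if $\widetilde U(i)\ge\mu^\star$. By monotonicity, an action that dies never becomes alive again. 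Call round $t$ good if $\max_{i\in\cD_t}\widetilde U(i)\ge\mu^\star$, and bad otherwise.

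\textbf{Good rounds.} The played action satisfies $\mu^\star\le\widetilde U(I_t)\le U_{t-1}(I_t)$, since $I_t$ was just refreshed. So $\Delta_{I_t}\le U_{t-1}(I_t)-\mu_{I_t}\le w_{t-1}(I_t)=2\beta\lVert x_{I_t}\rVert_{V_{t-1}^{-1}}$ on $\mathcal{E}_\delta$. Since also $\Delta_{I_t}\le 2$, we get $\Delta_{I_t}\le 2\max\{\beta,1\}\min\{1,\lVert x_{I_t}\rVert_{V_{t-1}^{-1}}\}$. Summing, applying Cauchy--Schwarz and the elliptical potential lemma of \citet{abbasi2011improved}, $\sum_t\min\{1,\lVert x_{I_t}\rVert^2_{V_{t-1}^{-1}}\}\le 2d\log(1+TL^2/(\lambda d))$, gives the first term of Eq.~\eqref{eq:problem-independent-regret}. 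This holds for the full-information design sequence, regardless of how $\cD_t$ was formed.

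\textbf{Bad rounds.} Each bad round costs at most $\Delta_{I_t}\le 2$, so it suffices to show that, with probability at least $1-\delta'$, there are at most $N=\lceil \frac{K-k}{q}(\log(K-k)+\log\frac1{\delta'})\rceil$ bad rounds. In a bad round no alive action lies in $\cD_t$; in particular $i^\star\in\mathcal{B}_{t-1}\setminus\cR_t$. Conditional on the past, $i^\star$ is drawn into $\cR_t$ with probability exactly $q/(K-k)$, independently across rounds. If $i^\star$ is drawn, the round is good, and $\cA_t$ contains the argmax, which is alive. My plan is an accounting argument in which bad rounds can recur only after the alive members of $\cA_t$ have died. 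Dead actions never revive, and any action that is ever alive and retained has been scored while in the pool of $K-k$ initially inactive actions, or in $\cA_0$. I would then attach each maximal run of bad rounds to a distinct action of $\mathcal{B}_0$ that is waiting to be sampled. A union bound over these at most $K-k$ actions, each with failure probability $(1-q/(K-k))^{n}\le e^{-nq/(K-k)}$ for a window of length $n$, then gives the $\log(K-k)$ factor and the bound $N$.

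\textbf{Main obstacle.} The hardest step is this counting of bad rounds. After $i^\star$ is first sampled it can be evicted by $k$ actions with higher clipped scores, and those actions may later all die at once. So badness is not simply ended by the first appearance of $i^\star$. I would first try to prove a potential-type claim. Let $a_t$ be the number of actions in $\cA_t$ that are still alive. My reading of Algorithm~\ref{alg:persistent-candidate} is that an alive action is never dropped from $\cA_t$ in favour of a dead one, because scores are ranked directly. If so, a bad round can occur only after every retained alive action has died, which happens at most once per action. Combined with the geometric waiting times for re-sampling, this should give the total of $N$ bad rounds. Intersecting with $\mathcal{E}_\delta$ then yields the overall probability $1-\delta-\delta'$.
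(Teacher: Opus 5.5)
Your treatment of the good rounds matches the paper's (regret reduction plus the elliptical potential). The bound on the number of bad rounds, however, has a genuine gap. A union bound over at most $K-k$ runs, each failing with probability $(1-q/(K-k))^n$, shows only that with probability at least $1-\delta'$ \emph{every} maximal run of bad rounds is shorter than $n$. The theorem needs the \emph{total} $N_T^{\mathrm{empty}}\le n$.

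As you note yourself, runs can recur: whenever all alive members of $\mathcal{A}_t$ die at a refresh, a new run may start. Nothing in your argument prevents this from happening once per death, i.e., up to order $K$ times. Nor does anything make the attached actions distinct, since $i^\star$ is the waiting action in every run. Your fallback, ``each action dies at most once, combined with geometric waiting times,'' bounds the number of runs by the number of deaths and each run by a geometric variable of mean about $(K-k)/q$. That gives something like $K\cdot\frac{K-k}{q}$ (up to logs), not $\frac{K-k}{q}\log(K-k)$. The $\log(K-k)$ in the statement does not come from a union bound over runs.

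What is missing is an argument that aggregates empty rounds across runs. The paper uses the potential $W_t=\rho^{-N_t^{\mathrm{empty}}}m_t$, where $m_t$ is the number of inactive alive actions, $p=q/(K-k)$ and $\rho=1-p$.

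\begin{itemize}
\item Since alive candidates are never displaced by dead ones in the top-$k$ update, $m_t$ is non-increasing.
\item In a round where no active action survives the refresh, split the inactive alive actions into two groups. There are $s$ actions whose current UCB is at least $\mu^\star$; sampling any of them ends emptiness and removes it from the inactive alive set. There are $r$ actions whose UCB is below $\mu^\star$; sampling them kills them.
\item This gives $\mathbb{E}[m_t\mid\mathcal{F}_{t-1}]\le m_{t-1}-pr-(1-a)$, where $a\le\rho^s$ is the probability that the round is empty.
\item The elementary inequality $ps\,a\le\rho(1-a)$ exactly pays for the extra factor $\rho^{-1}$ incurred on an empty round. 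Hence $W_t$ is a nonnegative supermartingale.
\item On $\mathcal{E}_\delta$, $i^\star$ is alive and inactive at every empty round, so $W_{\tau_n}\ge\rho^{-n}$ at the $n$th empty round.
\item The maximal inequality with $W_0\le K-k$ then gives $\mathbb{P}(\mathcal{E}_\delta\cap\{N_T^{\mathrm{empty}}\ge n\})\le (K-k)\rho^n$.
\end{itemize}

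So the $\log(K-k)$ comes from the initial value $W_0$. Without such a cross-run potential, or some other way to bound the sum of run lengths, your proposal does not establish the stated additive term.
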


\begin{theorem}[Problem-dependent regret]
\label{thm:problem-dependent-regret}
Under the conditions of Theorem~\ref{thm:problem-independent-regret}, with probability at least $1-\delta-\delta'$,
Algorithm~\ref{alg:persistent-candidate} satisfies
\begin{equation}
    \begin{aligned}
        R_T
        &\le
        \frac{8\max\{\beta^2,1\}d}{\Delta_{\min}}
        \log\!\left(1+\frac{TL^2}{\lambda d}\right)\\
        &\quad+
        2\left\lceil
            \frac{K-k}{q}
            \left(
                \log(K-k)+\log\frac{1}{\delta'}
            \right)
        \right\rceil.
    \end{aligned}
    \label{eq:problem-dependent-regret}
\end{equation}
\end{theorem}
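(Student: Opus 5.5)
The plan is to reuse the decomposition behind Theorem~\ref{thm:problem-independent-regret}: split the horizon into a discovery phase and a learning phase, bound the discovery phase by the same additive term, and replace the $\sqrt{T}$ learning bound by the usual gap-dependent argument. Work on $\mathcal{E}_\delta\cap\mathcal{G}$, where $\mathcal{G}$ is the sampling event with $\mathbb{P}(\mathcal{G}^c)\le\delta'$. On $\mathcal{G}$, within
\[
\tau_0=\left\lceil\frac{K-k}{q}\left(\log(K-k)+\log\frac1{\delta'}\right)\right\rceil
\]
rounds, an action whose clipped score stays at least $\mu^\star$ enters the candidate set. Indeed, each fixed inactive action is missed in a round with probability $1-q/|\mathcal{B}_{t-1}|=1-q/(K-k)$. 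Over $\tau_0$ rounds this gives $(1-q/(K-k))^{\tau_0}\le\delta'/(K-k)$, and a union bound over the at most $K-k$ inactive actions gives $\mathbb{P}(\mathcal{G}^c)\le\delta'$.

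The invariant driving the argument uses the properties stated in the introduction. On $\mathcal{E}_\delta$ every clipped score satisfies $\widetilde U(i)\ge\mu_i$, since it is a minimum of valid upper bounds, and clipped scores are non-increasing. Hence $\widetilde U(i^\star)\ge\mu^\star$ always. Once some action with $\widetilde U\ge\mu^\star$ lies in $\mathcal{D}_t$, the selected action also has $\widetilde U\ge\mu^\star$, and that action is retained in $\mathcal{A}_t$ because $k\ge1$ and it has the largest score. By induction, after the first such round the active set always contains an action with clipped score $\ge\mu^\star$. Every later selection $I_t$ then satisfies $\widetilde U(I_t)\ge\mu^\star$. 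Rounds before this one number at most $\tau_0$ on $\mathcal{G}$ and each contributes regret at most $\Delta_{I_t}\le 2$, which yields the additive $2\tau_0$ term.

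On the remaining rounds, $\mu^\star\le\widetilde U(I_t)\le U_{t-1}(I_t)$, because $I_t\in\mathcal{D}_t$ was just refreshed. So $\Delta_{I_t}\le U_{t-1}(I_t)-\mu_{I_t}\le w_{t-1}(I_t)=2\beta\|x_{I_t}\|_{V_{t-1}^{-1}}$. Combined with $\Delta_{I_t}\le 2$, this gives $\Delta_{I_t}\le 2\max\{\beta,1\}\min\{1,\|x_{I_t}\|_{V_{t-1}^{-1}}\}$. For the problem-dependent bound, use $\Delta_{I_t}\ge\Delta_{\min}$ whenever it is nonzero, so that
\[
\Delta_{I_t}\le\frac{\Delta_{I_t}^2}{\Delta_{\min}}\le\frac{4\max\{\beta^2,1\}}{\Delta_{\min}}\min\{1,\|x_{I_t}\|^2_{V_{t-1}^{-1}}\}.
\]
Summing and applying the elliptical potential lemma, $\sum_t\min\{1,\|x_{I_t}\|^2_{V_{t-1}^{-1}}\}\le 2d\log(1+TL^2/(\lambda d))$, gives the first term. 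The sum is over all rounds of the actual trajectory, so the lemma applies directly even though only a subset of rounds is counted.

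The main obstacle is justifying the discovery step rigorously. The sampling event must be stated for the actions that are inactive at time $0$, while $\mathcal{B}_{t-1}$ changes over time. I would handle this in two cases. If $i^\star\in\mathcal{A}_0$, the invariant already holds at the start. Otherwise, consider the first time $i^\star$ is sampled. Either some action with score $\ge\mu^\star$ was in $\mathcal{A}_{t-1}$ earlier, in which case the invariant already holds, or the active set contains no such action. In the latter case $i^\star$ stays in $\mathcal{B}_{t-1}$, which has fixed size $K-k$, and is drawn with probability $q/(K-k)$ independently each round. Defining $\mathcal{G}$ as ``$i^\star$ is sampled within $\tau_0$ rounds'' would even avoid the union bound. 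Since $i^\star$ is unknown, though, I would keep the union bound over all initially inactive actions to match the stated $\log(K-k)$ factor.
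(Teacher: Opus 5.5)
Your analysis of nonempty rounds is correct and matches the paper's regret-reduction lemma: $\mu^\star\le\widetilde U(I_t)\le U_{t-1}(I_t)$, the width bound, the $\Delta_{I_t}^2/\Delta_{\min}$ step, and the elliptical potential. The gap is in how you count the rounds charged at cost $2$. Your invariant is false. It says that after the first round in which $\mathcal{D}_t$ contains an action with clipped score at least $\mu^\star$, the active set always contains such an action. You are right that $I_t$ is retained in $\mathcal{A}_t$. But in round $t+1$ it is refreshed again to $\min\{\widetilde U(I_t),U_t(I_t)\}$. For a suboptimal $I_t$ the new UCB can fall below $\mu^\star$, and this is exactly what happens once it has been played enough.

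Only an optimal action is protected under refreshes, and even $i^\star$ need not stay active. It is evicted whenever $k$ candidates have larger clipped scores, and those alive but possibly suboptimal actions can later all die in a single refresh. Concretely, take $k=1$ and $\mathcal{A}_t=\{j\}$ for a suboptimal $j$ with $U_t(j)<\mu^\star$. If all $q$ challengers drawn in round $t+1$ have refreshed scores below $\mu^\star$, then $\mathcal{D}_{t+1}\cap\mathcal{S}_{t+1}=\varnothing$, and nothing guarantees $\widetilde U(I_{t+1})\ge\mu^\star$.

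Such empty-candidate rounds can recur throughout the horizon. An event like `$i^\star$ is sampled within $\tau_0$ rounds' therefore controls only the first stretch of them. When, as is typical, $U_0(i)\ge\mu^\star$ for all $i$, the first round is already nonempty. In that case your $2\tau_0$ term accounts for none of the empty rounds that actually occur.

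What you need is a bound on the total count $N_T^{\mathrm{empty}}$ over the whole horizon. A naive per-episode bound would multiply $\tau_0$ by the number of re-discovery episodes, which is bounded only by the number of actions. The paper instead shows that $W_t=\rho^{-N_t^{\mathrm{empty}}}m_t$ is a nonnegative supermartingale, where $m_t=|\mathcal{S}_t\setminus\mathcal{A}_t|$ and $\rho=1-q/(K-k)$. Dead actions never revive. So in a round where no active action survives the refresh, each sampled inactive alive action either dies or enters the active set, and either way the inactive alive pool shrinks permanently. Each empty round multiplies the potential by $\rho^{-1}$, and the two effects balance in expectation.

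On $\mathcal{E}_\delta$, every empty round has $m_t\ge1$, because $i^\star$ is then alive and inactive. The maximal inequality then gives $\mathbb{P}(\mathcal{E}_\delta\cap\{N_T^{\mathrm{empty}}\ge n\})\le (K-k)\rho^n$, which yields the stated additive term. Note that in the paper's argument the $\log(K-k)$ factor comes from $W_0\le K-k$, that is, from accounting for repeated re-discoveries. It is not a union bound kept only `to match the statement'. Your single-action event would have produced a bound without this factor, which is a sign that rounds were being missed.
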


Note that the dependence on $\delta$ enters through the confidence radius $\beta$ in Eq.~\eqref{eq:confidence-radius}.
Both bounds consist of a standard linear-bandit estimation term and the same additive discovery term. At a fixed confidence level, the latter depends on $K$, $k$, and $q$, but not on the horizon $T$. 
It accounts for the regret induced by partial access to the action set.
Note that we can derive an expected regret by setting $\delta, \delta'=1/T$ and using the fact that the regret per round is constant (at most $2$).

\subsection{Proof Sketch}
\label{subsec:regret-proof-sketch}

We outline the main steps behind Theorems~\ref{thm:problem-independent-regret} and~\ref{thm:problem-dependent-regret}. The analysis separates the regret on rounds whose candidate sets contain an alive action from the cost of rounds whose candidate sets contain no alive action.

\paragraph{Alive actions.}
Work on the confidence event $\mathcal{E}_\delta$. Let $\widetilde U_t(i)$ denote the clipped score of action $i$ after round $t$, and define
\begin{equation}
    \mathcal{S}_t = \{i\in[K]:\widetilde U_t(i)\ge\mu^\star\},
    \qquad
    m_t = |\mathcal{S}_t\setminus\mathcal{A}_t|.
    \label{eq:proof-sketch-alive-set}
\end{equation}
Actions in $\mathcal{S}_t$ are called \emph{alive}. On $\mathcal{E}_\delta$, every UCB evaluated for an optimal action is at least $\mu^\star$. The minimum update of the clipped scores therefore ensures that an optimal action remains alive at every round. Moreover, clipped scores are non-increasing, so
\[
    \mathcal{S}_{t+1}\subseteq\mathcal{S}_t.
\]
Thus, once an action becomes dead, it remains dead. 

Define
\begin{equation}
    Z_t = \mathbf{1}\{\mathcal{D}_t\cap\mathcal{S}_t=\varnothing\},
    \qquad
    N_t^{\mathrm{empty}} = \sum_{u=1}^t Z_u.
    \label{eq:proof-sketch-empty-rounds}
\end{equation}
Hence, $N_t^{\mathrm{empty}}$ counts the rounds up to time $t$ for which the refreshed candidate set contains no alive action.
We call round $t$ an \emph{empty-candidate round} if $Z_t=1$, i.e., the refreshed candidate set contains no alive action, and a \emph{nonempty-candidate round} otherwise.

\paragraph{Empty-candidate rounds.}
Set $N=K-k$. Conditional on the history before $\mathcal{R}_t$ is sampled, any fixed set of $s$ inactive actions is missed by the challenger set with probability
\begin{equation}
    \frac{\binom{N-s}{q}}{\binom{N}{q}}
    \le
    \left(1-\frac{q}{N}\right)^s.
    \label{eq:proof-sketch-miss-probability}
\end{equation}
For $q<N$, set
\[
    p=\frac{q}{N},
    \qquad
    \rho=1-p.
\]
The number of inactive alive actions decreases in two ways. First, an inactive alive action becomes dead if it is sampled and its refreshed UCB falls below $\mu^\star$. Second, an alive candidate that survives the refresh may enter the active set through the top-$k$ update, thereby leaving the inactive alive set. To account for these changes together with the occurrence of empty-candidate rounds, we introduce the potential
\begin{equation}
    W_t = \rho^{-N_t^{\mathrm{empty}}}m_t,
    \label{eq:proof-sketch-potential}
\end{equation}
where $m_t$ tracks the remaining inactive alive actions and $\rho^{-N_t^{\mathrm{empty}}}$ accounts for the accumulated empty-candidate rounds. The key step is to show that their interaction makes $W_t$ a nonnegative supermartingale:
\begin{equation}
    \mathbb{E}[W_t\mid\mathcal{F}_{t-1}]
    \le W_{t-1}.
    \label{eq:proof-sketch-supermartingale}
\end{equation}

On $\mathcal{E}_\delta$, an optimal action remains alive. If round $t$ is an empty-candidate round, this action is not retained in $\mathcal{A}_t$, and hence $m_t\ge1$. Let $\tau_n$ denote the time of the $n$th empty-candidate round. On $\mathcal{E}_\delta\cap\{N_T^{\mathrm{empty}}\ge n\}$,
\[
    W_{\tau_n}
    =
    \rho^{-n}m_{\tau_n}
    \ge
    \rho^{-n}.
\]
The maximal inequality for nonnegative supermartingales, together with $W_0\le N$, therefore gives
\begin{equation}
    \mathbb{P}\!\left(
        \mathcal{E}_\delta
        \cap
        \{N_T^{\mathrm{empty}}\ge n\}
    \right)
    \le
    N\rho^n
    \le
    N\exp\!\left(-\frac{qn}{N}\right).
    \label{eq:proof-sketch-empty-tail}
\end{equation}
Taking
\[
    n
    =
    \left\lceil
        \frac{K-k}{q}
        \left(
            \log(K-k)+\log\frac{1}{\delta'}
        \right)
    \right\rceil
\]
gives, with joint failure probability at most $\delta'$ on $\mathcal{E}_\delta$,
\begin{equation}
    N_T^{\mathrm{empty}}
    \le
    \left\lceil
        \frac{K-k}{q}
        \left(
            \log(K-k)+\log\frac{1}{\delta'}
        \right)
    \right\rceil.
    \label{eq:proof-sketch-empty-bound}
\end{equation}
If $q=N$, all inactive actions are sampled at every round, so $N_T^{\mathrm{empty}}=0$ on $\mathcal{E}_\delta$.

\paragraph{Regret decomposition.}
Suppose $Z_t=0$. The refreshed candidate set then contains an alive action with clipped score at least $\mu^\star$. Since the algorithm selects the candidate with the largest clipped score, the selected action $I_t$ also has refreshed clipped score at least $\mu^\star$. This clipped score is at most the current UCB $U_{t-1}(I_t)$. Together with the lower confidence bound on $\mathcal{E}_\delta$,
\[
    \Delta_{I_t}
    \le
    U_{t-1}(I_t)-L_{t-1}(I_t)
    =
    w_{t-1}(I_t).
\]
Since $\mu_i\in[-1,1]$, we also have $\Delta_{I_t}\le2$. On an empty-candidate round we use this worst-case bound. Summing over rounds gives
\begin{equation}
    R_T
    \le
    \sum_{t=1}^T
    \min\{2,w_{t-1}(I_t)\}
    +2N_T^{\mathrm{empty}}.
    \label{eq:proof-sketch-gap-free-reduction}
\end{equation}
For the problem-dependent bound, every suboptimal action satisfies $\Delta_{I_t}\ge\Delta_{\min}$. Hence, on a nonempty-candidate round,
\[
    \Delta_{I_t}
    =
    \frac{\Delta_{I_t}^2}{\Delta_{I_t}}
    \le
    \frac{
        \min\{2,w_{t-1}(I_t)\}^2
    }{\Delta_{\min}},
\]
while an optimal action incurs zero regret. Therefore,
\begin{equation}
    R_T
    \le
    \frac{1}{\Delta_{\min}}
    \sum_{t=1}^T
    \min\{2,w_{t-1}(I_t)\}^2
    +2N_T^{\mathrm{empty}}.
    \label{eq:proof-sketch-gap-dependent-reduction}
\end{equation}

\paragraph{Confidence widths.}
Let
\[
    a_t = \|x_{I_t}\|_{V_{t-1}^{-1}}.
\]
The elliptical-potential lemma gives
\begin{equation}
    \sum_{t=1}^T\min\{1,a_t^2\}
    \le
    2d\log\!\left(1+\frac{TL^2}{\lambda d}\right).
    \label{eq:proof-sketch-elliptical-potential}
\end{equation}
Since $w_{t-1}(I_t)=2\beta a_t$,
\[
    \min\{2,w_{t-1}(I_t)\}
    \le
    2\max\{\beta,1\}\min\{1,a_t\}.
\]
Cauchy--Schwarz and Eq.~\eqref{eq:proof-sketch-elliptical-potential} then imply
\begin{equation}
    \small
    \sum_{t=1}^T\min\{2,w_{t-1}(I_t)\}
    \le
    2\max\{\beta,1\}
    \sqrt{
        2Td\log\!\left(1+\frac{TL^2}{\lambda d}\right)
    }.
    \label{eq:proof-sketch-linear-width}
\end{equation}
Similarly,
\begin{equation}
    \sum_{t=1}^T\min\{2,w_{t-1}(I_t)\}^2
    \le
    8\max\{\beta^2,1\}d
    \log\!\left(1+\frac{TL^2}{\lambda d}\right).
    \label{eq:proof-sketch-squared-width}
\end{equation}

Combining Eqs.~\eqref{eq:proof-sketch-empty-bound}, \eqref{eq:proof-sketch-gap-free-reduction}, and \eqref{eq:proof-sketch-linear-width} gives the problem-independent bound. Combining Eqs.~\eqref{eq:proof-sketch-empty-bound}, \eqref{eq:proof-sketch-gap-dependent-reduction}, and \eqref{eq:proof-sketch-squared-width} gives the problem-dependent bound. Since $\mathbb{P}(\mathcal{E}_\delta^c)\le\delta$ and the empty-candidate bound has joint failure probability at most $\delta'$ on $\mathcal{E}_\delta$, both bounds hold with probability at least $1-\delta-\delta'$.

\section{Tradeoff Between Acceleration and Regret}
\label{sec:acceleration-tradeoff}

Theorems~\ref{thm:problem-independent-regret} and~\ref{thm:problem-dependent-regret} separate the standard linear-bandit learning term from an additive term induced by partial candidate access. The latter is obtained by bounding $N_T^{\mathrm{empty}}$, the number of rounds in which the candidate set contains no alive action, and scales as $\widetilde{O}((K-k)/q)$. This section asks whether uniform partial access itself imposes an unavoidable regret cost with the same dependence on $K$, $k$, and $q$.

\begin{figure*}[htbp]
\centering
\includegraphics[width=\textwidth]{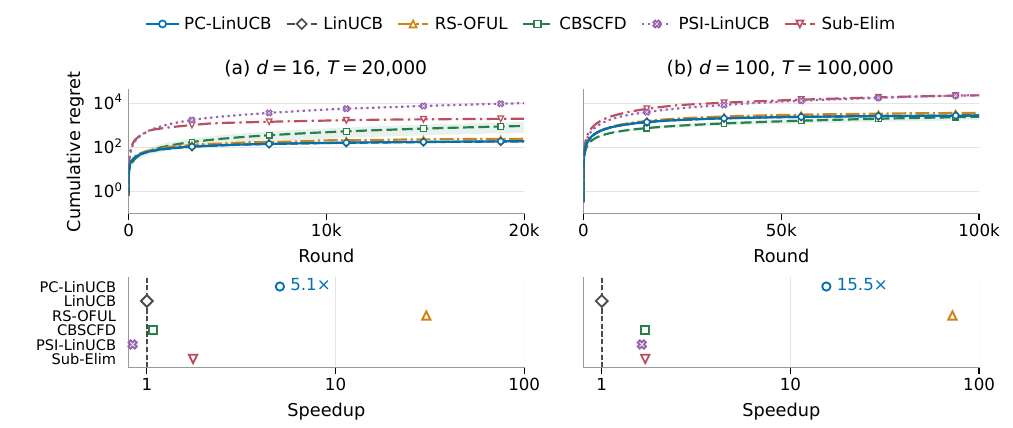}
\vspace{-2em}
\caption{Six-method comparison on synthetic action sets. Columns show $(d,T)=(16,20{,}000)$ and $(100,100{,}000)$, with $K=10{,}000$ and $k=q=250$ for \textsc{PC-LinUCB}. Top: cumulative pseudo-regret (symlog scale). Bottom: end-to-end speedup over LinUCB (logarithmic scale). Curves average ten repeats; bands show $\pm1$ SE.}
\label{fig:synthetic-results}
\end{figure*}

\subsection{Access-Induced Regret}
\label{subsec:access-induced-regret}

Consider a general class of policies that maintain $k$ active actions and sample $q$ challengers uniformly from the remaining $K-k$ actions at each round. The union of the active and challenger sets forms the candidate set, from which the policy must choose both the played action and the next active set. Apart from this access restriction, the policy is unrestricted. Algorithm~\ref{alg:persistent-candidate} belongs to this class.

Suppose the unique optimal action $i^\star$ is initially inactive, and let
\[
    \tau=\inf\{t\ge1:i^\star\in\mathcal{R}_t\}
\]
be the first round in which it appears in the challenger set. Before round $\tau$, the candidate-access mechanism prevents the policy from selecting $i^\star$, even if its identity is known in advance. Let $\Delta=\min_{i\ne i^\star}\Delta_i>0$ and define the \emph{access-induced regret}
\begin{equation}
    R_T^{\mathrm{access}}
    =
    \Delta\sum_{t=1}^T\mathbf{1}\{\tau>t\}.
    \label{eq:access-induced-regret}
\end{equation}
On $\{\tau>t\}$, the optimal action is unavailable and every action the policy can select is suboptimal, with gap at least $\Delta$. Hence, pathwise,
\begin{equation}
    R_T\ge R_T^{\mathrm{access}}.
    \label{eq:total-vs-access-regret}
\end{equation}

\begin{theorem}[Unavoidable Regret from Uniform Challenger Sampling]
\label{thm:uniform-gating-lower-bound}
Suppose the optimal action $i^\star$ is unique and initially inactive. Then, for every $T\ge1$, every policy in this class satisfies
\begin{equation}
    \E[R_T^{\mathrm{access}}]
    =
    \Delta\frac{K-k-q}{q}
    \left[
        1-\left(1-\frac{q}{K-k}\right)^T
    \right].
    \label{eq:exact-access-lower-bound}
\end{equation}
Moreover, if $q\le(K-k)/2$, then
\begin{equation}
    \E[R_T^{\mathrm{access}}]
    \ge
    \frac{1-e^{-1}}{2}\,
    \Delta
    \min\left\{
        T,\frac{K-k}{q}
    \right\}.
    \label{eq:order-access-lower-bound}
\end{equation}
\end{theorem}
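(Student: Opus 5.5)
The plan is to show that $\tau$ is a geometric random variable whose law does not depend on the policy, and then sum its tail probabilities. Write $N=K-k$ and $p=q/N$. First I will check by induction on $t$ that on $\{\tau>t\}$ the action $i^\star$ is still inactive, i.e.\ $i^\star\in\mathcal{B}_{t}$. It is inactive initially by assumption. If $i^\star\in\mathcal{B}_{t-1}$ and $i^\star\notin\mathcal{R}_t$, then $i^\star\notin\mathcal{D}_t=\mathcal{A}_{t-1}\cup\mathcal{R}_t$. Since every policy in the class must choose $\mathcal{A}_t\subseteq\mathcal{D}_t$, we get $i^\star\notin\mathcal{A}_t$. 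This holds whatever the policy does with its information, including knowing the identity of $i^\star$.

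Next, on $\{\tau>t-1\}$, which is $\mathcal{F}_{t-1}$-measurable, we have $i^\star\in\mathcal{B}_{t-1}$ and $|\mathcal{B}_{t-1}|=N$. The challenger set $\mathcal{R}_t$ is a uniform $q$-subset of $\mathcal{B}_{t-1}$, so
\[
\mathbb{P}(i^\star\in\mathcal{R}_t\mid\mathcal{F}_{t-1})=\frac{\binom{N-1}{q-1}}{\binom{N}{q}}=\frac{q}{N}=p
\]
on that event. Hence $\mathbb{P}(\tau>t\mid\tau>t-1)=1-p$, and by the tower property $\mathbb{P}(\tau>t)=(1-p)^t$ for every policy. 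This conditioning step is the only point needing care: the history, and hence $\mathcal{A}_{t-1}$, may depend on the policy in arbitrary ways, but the conditional hitting probability is always exactly $p$ because $|\mathcal{B}_{t-1}|$ is fixed. For $q=N$ we get $p=1$, $\tau=1$ almost surely, and both sides of Eq.~\eqref{eq:exact-access-lower-bound} vanish, so this case is consistent.

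Summing the tails gives
\[
\E[R_T^{\mathrm{access}}]=\Delta\sum_{t=1}^T(1-p)^t=\Delta\frac{1-p}{p}\bigl[1-(1-p)^T\bigr],
\]
and $(1-p)/p=(N-q)/q$, which is Eq.~\eqref{eq:exact-access-lower-bound}.

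For Eq.~\eqref{eq:order-access-lower-bound}, assume $q\le N/2$, so that $(N-q)/q\ge N/(2q)$. Also $1-(1-p)^T\ge1-e^{-pT}$. I then split into two cases.
\begin{itemize}
\item If $pT\ge1$, then $1-e^{-pT}\ge1-e^{-1}$, and the bound is $\frac{1-e^{-1}}{2}\Delta\frac{N}{q}$.
\item If $pT<1$, concavity of $x\mapsto1-e^{-x}$ on $[0,1]$ gives $1-e^{-x}\ge(1-e^{-1})x$. The bound becomes $\Delta\frac{N}{2q}(1-e^{-1})\frac{qT}{N}=\frac{1-e^{-1}}{2}\Delta T$.
\end{itemize}
In each case the resulting quantity is at least $\frac{1-e^{-1}}{2}\Delta\min\{T,N/q\}$, which completes the argument.
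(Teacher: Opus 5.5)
Your proposal is correct and follows the paper's proof essentially step for step. The paper also argues that $i^\star$ stays inactive until $\tau$ because $\mathcal{A}_t\subseteq\mathcal{D}_t$, derives $\mathbb{P}(\tau>t)=(1-q/N)^t$ from the conditional hit probability $q/N$, and sums the geometric tails; for the second bound it uses $(1-p)/p\ge 1/(2p)$, $1-(1-p)^T\ge 1-e^{-pT}$, and $1-e^{-x}\ge(1-e^{-1})\min\{x,1\}$, the last of which your two-case split proves explicitly, while your separate treatment of $q=N$ is only a consistency check, since the geometric-sum formula already covers that case.
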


Theorem~\ref{thm:uniform-gating-lower-bound} isolates the regret induced solely by the candidate-access restriction. Until its first appearance, the optimal action remains inactive and has probability $q/(K-k)$ of being sampled as a challenger in each round. Hence, $\tau$ is geometrically distributed. 

The access event $\mathcal{E}_t^{\mathrm{opt}}=\{i^\star\notin\mathcal{D}_t\}$ considered here is weaker than the empty-candidate event $\mathcal{E}_t^{\mathrm{empty}}=\{\mathcal{D}_t\cap\mathcal{S}_t=\varnothing\}$ used to derive the regret bounds in Theorems~\ref{thm:problem-independent-regret} and~\ref{thm:problem-dependent-regret}. On the confidence event $\mathcal{E}_\delta$, the optimal action is always alive, and therefore
\[
    \mathcal{E}_t^{\mathrm{empty}}
    \subseteq
    \mathcal{E}_t^{\mathrm{opt}}.
\]
The access event is weaker because the optimal action may be absent while the candidate set still contains suboptimal alive actions. Such rounds do not contribute to $N_T^{\mathrm{empty}}$, and their regret can still be controlled by the confidence-width argument. Thus, Theorem~\ref{thm:uniform-gating-lower-bound} does not provide a lower bound on $N_T^{\mathrm{empty}}$ or on the corresponding additive term itself. Instead, it shows that uniform partial access alone imposes an expected regret cost of order $\Omega((K-k)/q)$ for fixed $\Delta$ and sufficiently long horizons.

\begin{figure*}[t!]
\centering
\includegraphics[width=\textwidth]{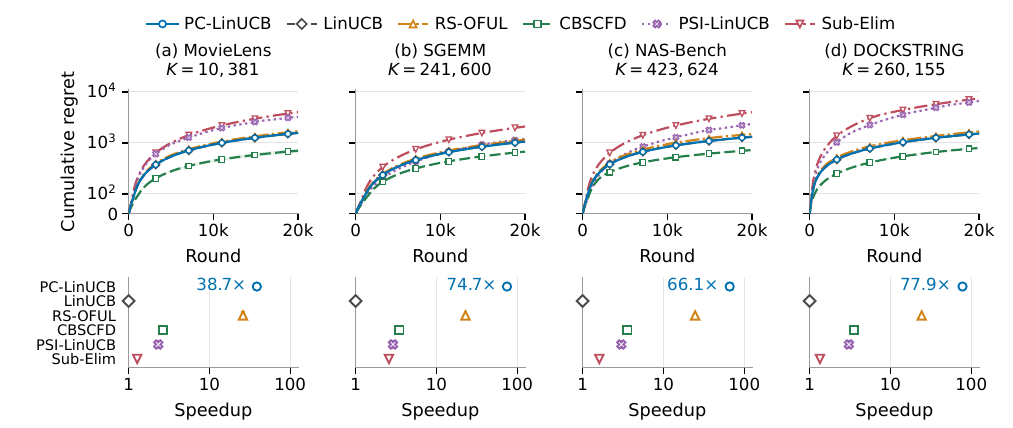}
\vspace{-2em}
\caption{Six-method comparison on real action features with linear simulated rewards, $d=128$, and $T=20{,}000$. \textsc{PC-LinUCB} uses $k=q$ and $B/K\approx2\%$. Top: cumulative pseudo-regret (symlog scale). Bottom: end-to-end speedup over LinUCB (logarithmic scale). Curves average five repeats; bands and error bars show $\pm1$ SE.}
\label{fig:semi-synthetic-results}
\end{figure*}

\subsection{Acceleration and Access-Induced Regret}
\label{subsec:acceleration-access-regret}

Uniform partial access reduces computation by scoring only $k+q$ candidate actions per round instead of all $K$ actions. Ignoring the per-action scoring cost common to both methods, we define the nominal acceleration factor as
\begin{equation}
    \mathsf{S}
    =
    \frac{K}{k+q}.
    \label{eq:nominal-acceleration}
\end{equation}
This quantity measures the reduction in the number of action scores evaluated per round; it does not include the model-update cost shared by exhaustive and candidate-based methods.

When $q\le(K-k)/2$ and $T\ge(K-k)/q$, Theorem~\ref{thm:uniform-gating-lower-bound} gives
\begin{equation}
    \E[R_T^{\mathrm{access}}]
    \ge
    \frac{1-e^{-1}}{2}\,
    \Delta\frac{K-k}{q}
    \ge
    \frac{1-e^{-1}}{2}\,
    \Delta\,\mathsf{S}.
    \label{eq:acceleration-regret-tradeoff}
\end{equation}
Thus, within this policy class, increasing the nominal acceleration is accompanied by a larger lower bound on the expected access-induced regret. This tradeoff applies in particular to our \textsc{PC-LinUCB} and arises from restricting the number of actions made available to the policy at each round.

\subsection{Active Set and Challenger Budget}
\label{subsec:roles-k-q}

The parameters $k$ and $q$ both contribute to the per-round scoring budget $k+q$, but they serve different roles in the candidate mechanism. Increasing $k$ retains more actions in the active set, whereas increasing $q$ explores the inactive set more aggressively and reduces the waiting time until an inactive optimal action becomes accessible. Accordingly, the access-induced lower bound depends directly on $(K-k)/q$, while the nominal acceleration depends on the total budget $k+q$. Section~\ref{subsec:ablation-experiments} examines these effects empirically by varying both the total candidate budget and its allocation between $k$ and $q$.

\begin{figure*}[t]
    \centering
    \includegraphics[width=\textwidth]{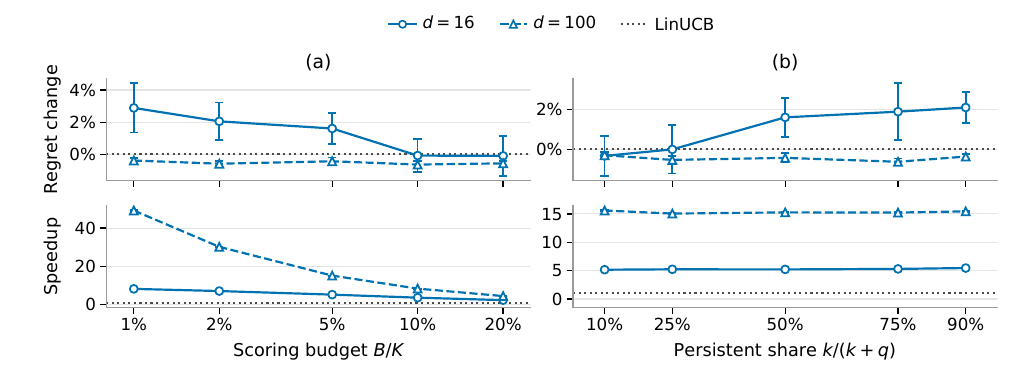}
    \vspace{-2em}
    \caption{\textsc{PC-LinUCB} ablations. (a) The total candidate budget $B=k+q$ varies with $k=q$. (b) The persistent share $k/B$ varies with $q=B-k$ and fixed $B/K=5\%$. Top: percentage change in mean final regret relative to LinUCB. Bottom: speedup over LinUCB.}
    \label{fig:ablation}
\end{figure*}

\section{Experiments}
\label{sec:experiments}

We compare cumulative pseudo-regret and wall-clock time, including initialization and all online rounds, on synthetic and four semi-synthetic benchmarks. The latter use real action features and linear rewards fitted to source labels, with Gaussian noise. Appendix~\ref{sec:experimental-details} gives experimental details; Appendix~\ref{app:detailed-speedup} reports timing and speedups.

All experiments use fixed action features, for which Sherman--Morrison caching reduces the per-action confidence cost of LinUCB from $\mathcal{O}(d^2)$ to $\mathcal{O}(d)$ but must refresh all $K$ cached widths every round, so it does not directly combine with candidate-based scoring. To isolate the gain from reducing the dependence on $K$, we use standard dense score evaluation for all methods.

\paragraph{Baselines.}
We compare \textsc{PC-LinUCB} with exhaustive LinUCB \citep{li2010contextual}, rarely switching OFUL (RS-OFUL; \citealp{abbasi2011improved}), CBSCFD \citep{chen2020efficient}, PSI-LinUCB \citep{shustova2025scalable}, and Sub-Elim \citep{yang2022sublinear}. All methods use $\lambda=0.2$ and the exploration coefficient $\beta$ defined in Eq.~\eqref{eq:confidence-radius}, with noise parameter $R=0.05$, norm bounds $L=S=1$, and confidence level $\delta=1/T$.

\subsection{Synthetic Experiments}
\label{subsec:synthetic-experiments}

\paragraph{Setup.} We sample the reward parameter and action vectors independently as
\[
    \theta^\star\sim\operatorname{Unif}(\mathbb{S}^{d-1}),
    \qquad
    x_i\overset{\mathrm{i.i.d.}}{\sim}\operatorname{Unif}(\mathbb{S}^{d-1}),
    \quad i\in[K],
\]
where $\mathbb{S}^{d-1}=\{x\in\mathbb{R}^d:\|x\|_2=1\}$, and add Gaussian reward noise of standard deviation $0.05$. We use $K=10{,}000$ and $(d,T)\in\{(16,20{,}000),(100,100{,}000)\}$, with ten paired repeats. For \textsc{PC-LinUCB}, we define the total candidate budget as $B=k+q$ and use $k=q=250$, giving $B/K=5\%$.

\paragraph{Results.}

In Figure~\ref{fig:synthetic-results}, \textsc{PC-LinUCB} speeds up LinUCB by $5.1\times$ and $15.5\times$ for $d=16$ and $d=100$, respectively, while changing mean final regret by only $+1.61\%$ and $-0.42\%$. RS-OFUL achieves larger speedups of $30.3\times$ and $72.3\times$, but increases mean final regret by about $31\%$. At $d=100$, CBSCFD achieves lower mean regret than \textsc{PC-LinUCB}, but at substantially higher computational cost.

\subsection{Semi-Synthetic Experiments}
\label{subsec:semi-synthetic-experiments}

\paragraph{Setup.} We use MovieLens-20M~\citep{vig2012taggenome, harper2015movielens}, UCI SGEMM~\citep{paredes2017sgemm}, NAS-Bench-101~\citep{ying2019nasbench101}, and DOCKSTRING~\citep{garciaortegon2022dockstring}, with $K=10{,}381$, $241{,}600$, $423{,}624$, and $260{,}155$, respectively. Each action set has $d=128$ features. Each benchmark uses $T=20{,}000$ and five paired repeats, with online rewards generated as 
\[
    Y_t=\mu_{I_t}+\eta_t,
    \qquad
    \eta_t\sim\mathcal{N}(0,0.05^2).
\]
\textsc{PC-LinUCB} uses $k=q=\lceil0.01K\rceil$.

\paragraph{Results.} In Figure~\ref{fig:semi-synthetic-results}, \textsc{PC-LinUCB} is the fastest of the six methods, with $38.7$--$77.9\times$ speedups over LinUCB and mean final regret differing by at most $1.2\%$. CBSCFD achieves lower regret on all four datasets, at $44.6$--$63.0\%$ of LinUCB's regret, but takes $14.6$--$22.1$ times as long as \textsc{PC-LinUCB}.

\subsection{Ablation Experiments}
\label{subsec:ablation-experiments}
Both ablations use the synthetic settings with ten paired repeats per configuration, varying only the candidate budget or its allocation.

\paragraph{Candidate budget.} We vary $B/K\in\{1\%,2\%,5\%,10\%,20\%\}$ with $k=q=B/2$ in both synthetic settings. Figure~\ref{fig:ablation} (a) shows that increasing the budget reduces speedup from $8.2\times$ to $2.4\times$ for $d=16$ and from $49.2\times$ to $4.5\times$ for $d=100$. The mean regret penalty for $d=16$ declines from $2.9\%$ to approximately zero, while regret at $d=100$ varies little.

\paragraph{Budget allocation.} We fix the total candidate budget at $B/K=5\%$ and vary the persistent share $k/B\in\{0.1,0.25,0.5,0.75,0.9\}$, with $q=B-k$. Since the total scoring budget is fixed, the speedup remains nearly unchanged across allocations. Figure~\ref{fig:ablation} (b) shows that, for $d=16$, mean final regret increases as a larger fraction of the budget is assigned to the active set, whereas for $d=100$ it remains nearly unchanged.

\subsection{Application to Spectral Bandits}
\label{subsec:spectral-application}

To examine whether the persistent candidate mechanism extends beyond LinUCB, we apply it to SpectralUCB~\citep{valko2014spectral}  on a synthetic graph bandit. We modify only the action-scoring procedure, replacing exhaustive scoring with the same persistent candidate mechanism used in \textsc{PC-LinUCB}. We consider $K=10{,}000$ graph nodes over $T=50{,}000$ rounds and compare exhaustive SpectralUCB with three candidate budgets, $B/K\in\{1\%,5\%,10\%\}$, using $k=q=B/2$. Each configuration is evaluated over ten paired repeats.

Figure~\ref{fig:spectral-application} shows that all three candidate budgets achieve nearly identical cumulative pseudo-regret to exhaustive SpectralUCB while providing substantial end-to-end speedups. This result illustrates that the persistent candidate mechanism can also serve as an efficient action-selection mechanism beyond LinUCB.

\begin{figure}[t]
    \centering
    \includegraphics[width=\columnwidth]{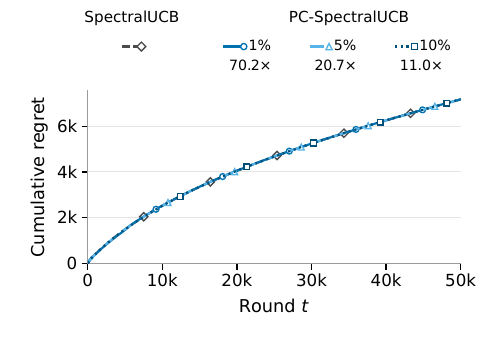}
    \vspace{-3em}
    \caption{Persistent candidate mechanism applied to SpectralUCB. Curves show mean cumulative pseudo-regret over ten runs for $B/K\in\{1\%,5\%,10\%\}$ with $k=q$; labels show speedup over SpectralUCB.}
    \label{fig:spectral-application}
\end{figure}

\section{Limitations}
\label{sec:discussion}

Our method relies on a fixed action set with fixed features, allowing clipped scores and persistent candidates to remain valid across rounds. It does not directly extend to contextual bandits, where changing actions or features can invalidate previously stored scores and candidates.

\clearpage
\section*{AI Use Statement}
\label{app:ai_use_statement}

We used AI tools to assist with code development, experiment execution and visualization, as well as language polishing of the manuscript. AI tools were also used to review the theoretical arguments for potential errors. In particular, an earlier version of Lemma~\ref{lem:appendix-inactive-alive-potential} and its proof was incorrect, and GPT-6 Pro assisted in identifying the issue and developing the corrected proof presented in this paper. All AI-assisted outputs, including experimental code and theoretical arguments, were reviewed and verified by the authors.

\balance
\bibliography{references}

\clearpage
\appendix
\onecolumn

\noindent\textbf{Appendix Contents.}
\begin{itemize}
    \setlength{\itemsep}{0pt}
    \setlength{\parskip}{0pt}
    \setlength{\parsep}{0pt}
    \item Appendix~\ref{sec:appendix-regret-upper-bounds}: Proofs of the Regret Upper Bounds
    \item Appendix~\ref{sec:appendix-uniform-challenger-lower-bound}: Proof of the Access-Induced Regret Lower Bound
    \item Appendix~\ref{sec:experimental-details}: Experimental Details
    \begin{itemize}
        \setlength{\itemsep}{0pt}
        \setlength{\parskip}{0pt}
        \setlength{\parsep}{0pt}
        \item Appendix~\ref{app:data_construction}: Data Construction
        \item Appendix~\ref{app:method_parameters}: Method Parameters
        \item Appendix~\ref{app:spectral-experiment}: Spectral Bandit Application
        \item Appendix~\ref{app:timing}: Timing
        \item Appendix~\ref{app:detailed-speedup}: Detailed Timing and Speedup Results
    \end{itemize}
\end{itemize}

\section{Proofs of the Regret Upper Bounds}
\label{sec:appendix-regret-upper-bounds}

Let $\overline U_{t-1}^{-}(i)$ and $\overline U_{t-1}^{+}(i)$ denote the values of the clipped score $\widetilde U(i)$ immediately before and after the refresh in round $t$ (i.e. Step~\ref{alg:refresh} in Algorithm~\ref{alg:persistent-candidate}), respectively. Thus $\overline U_0^{-}(i)=U_0(i)$ and
\begin{equation}
    \overline U_{t-1}^{+}(i)
    =
    \begin{cases}
        \min\{\overline U_{t-1}^{-}(i),U_{t-1}(i)\},
        & i\in\mathcal{D}_t,\\
        \overline U_{t-1}^{-}(i),
        & i\notin\mathcal{D}_t.
    \end{cases}
    \label{eq:appendix-lazy-refresh}
\end{equation}
The algorithm selects an action and retains the next active set according to
\begin{equation}
    I_t\in\arg\max_{i\in\mathcal{D}_t}\overline U_{t-1}^{+}(i),
    \qquad
    \mathcal{A}_t
    \in
    \arg\max_{\substack{\mathcal{A}\subseteq\mathcal{D}_t\\|\mathcal{A}|=k}}
    \sum_{i\in\mathcal{A}}\overline U_{t-1}^{+}(i).
    \label{eq:appendix-lazy-selection-retention}
\end{equation}
After round $t$, these scores are carried into the next round as
\begin{equation}
    \overline U_t^{-}(i)=\overline U_{t-1}^{+}(i),
    \qquad i\in[K].
    \label{eq:appendix-lazy-carry}
\end{equation}
Every refreshed candidate score is at most its current UCB score:
\begin{equation}
    \overline U_{t-1}^{+}(i)\le U_{t-1}(i),
    \qquad i\in\mathcal{D}_t.
    \label{eq:appendix-lazy-score-sandwich}
\end{equation}

Define the alive set from the clipped scores after round $t$, and let $m_t$ be the number of inactive alive actions:
\begin{equation}
    \mathcal{S}_t=\{i\in[K]:\overline U_t^{-}(i)\ge\mu^\star\},
    \qquad
    m_t=|\mathcal{S}_t\setminus\mathcal{A}_t|.
    \label{eq:appendix-alive-set}
\end{equation}
On the confidence event $\mathcal{E}_\delta$ from Eq.~\eqref{eq:confidence-event}, every evaluated $U_s(i)$ upper bounds $\mu_i$, so $i^\star\in\mathcal{S}_t$ for every $t$. Since the clipped scores never increase,
\begin{equation}
    \mathcal{S}_{t+1}\subseteq\mathcal{S}_t.
    \label{eq:appendix-alive-set-nesting}
\end{equation}
An action that becomes dead therefore remains dead.

Define the empty-candidate indicator after the round-$t$ refresh and its cumulative count by
\begin{equation}
    Z_t=\mathbf{1}\{\mathcal{D}_t\cap\mathcal{S}_t=\varnothing\},
    \qquad
    N_t^{\mathrm{empty}}=\sum_{u=1}^t Z_u,
    \qquad N_0^{\mathrm{empty}}=0.
    \label{eq:appendix-empty-candidate-count}
\end{equation}
We call round $t$ an \emph{empty-candidate round} if $Z_t=1$, i.e., the refreshed candidate set contains no alive action, and a \emph{nonempty-candidate round} otherwise.
When $\mathcal{D}_t$ contains an alive action after refresh, the selected clipped score is at least $\mu^\star$, allowing us to bound instantaneous regret by the current confidence width. For the rounds counted by $N_T^{\mathrm{empty}}$, we use the worst-case regret bound.

\begin{lemma}[Regret reduction]
\label{lem:appendix-regret-reduction}
On $\mathcal{E}_\delta$,
\begin{align}
    R_T
    &\le
    \sum_{t=1}^T\min\{2,w_{t-1}(I_t)\}
    +2N_T^{\mathrm{empty}},
    \label{eq:appendix-gap-free-reduction}\\
    R_T
    &\le
    \frac{1}{\Delta_{\min}}
    \sum_{t=1}^T\min\{2,w_{t-1}(I_t)\}^2
    +2N_T^{\mathrm{empty}},
    \label{eq:appendix-gap-dependent-reduction}
\end{align}
where the second inequality applies under the assumptions of Theorem~\ref{thm:problem-dependent-regret}.
\end{lemma}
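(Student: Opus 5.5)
The plan is to bound the instantaneous regret $\Delta_{I_t}$ round by round on $\mathcal{E}_\delta$. We split according to $Z_t$ and then sum over $t$. On an empty-candidate round ($Z_t=1$) we use only $\Delta_{I_t}\le 2$, which follows from $\mu_i\in[-1,1]$. Summed over these rounds, this gives at most $2N_T^{\mathrm{empty}}$. It then remains to show that on every nonempty-candidate round, $\Delta_{I_t}\le\min\{2,w_{t-1}(I_t)\}$, and, for the second inequality, $\Delta_{I_t}\le\min\{2,w_{t-1}(I_t)\}^2/\Delta_{\min}$. Adding these nonnegative terms over all rounds, including the empty ones, only enlarges the right-hand sides, so both displayed bounds follow.

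For a nonempty-candidate round, first check the timing conventions. By Eq.~\eqref{eq:appendix-lazy-carry}, $\overline U_t^{-}=\overline U_{t-1}^{+}$, so $\mathcal{S}_t$ consists exactly of the actions whose post-refresh clipped score in round $t$ is at least $\mu^\star$. If $Z_t=0$, there is some $j\in\mathcal{D}_t$ with $\overline U_{t-1}^{+}(j)\ge\mu^\star$. Since $I_t$ maximizes $\overline U_{t-1}^{+}$ over $\mathcal{D}_t$ by Eq.~\eqref{eq:appendix-lazy-selection-retention}, we get $\overline U_{t-1}^{+}(I_t)\ge\mu^\star$. Because $I_t\in\mathcal{D}_t$, Eq.~\eqref{eq:appendix-lazy-score-sandwich} gives $\overline U_{t-1}^{+}(I_t)\le U_{t-1}(I_t)$. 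On $\mathcal{E}_\delta$ we also have $\mu_{I_t}\ge L_{t-1}(I_t)$, since $\theta^\star\in\mathcal{C}_{t-1}$ and Cauchy--Schwarz in the $V_{t-1}$ norm applies. Hence
\[
\Delta_{I_t}=\mu^\star-\mu_{I_t}\le U_{t-1}(I_t)-L_{t-1}(I_t)=w_{t-1}(I_t).
\]
Combining this with $\Delta_{I_t}\le 2$ yields $\Delta_{I_t}\le\min\{2,w_{t-1}(I_t)\}$, which is the first inequality.

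For the second inequality, on a nonempty-candidate round we distinguish two cases.
\begin{itemize}
\item If $\Delta_{I_t}=0$, the bound holds trivially because the right-hand side is nonnegative.
\item Otherwise $\Delta_{I_t}\ge\Delta_{\min}>0$. Writing $\Delta_{I_t}=\Delta_{I_t}^2/\Delta_{I_t}$ and using $0\le\Delta_{I_t}\le\min\{2,w_{t-1}(I_t)\}$ gives $\Delta_{I_t}\le\min\{2,w_{t-1}(I_t)\}^2/\Delta_{\min}$.
\end{itemize}
Summing completes the argument.

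The only delicate step is the index bookkeeping in the nonempty case. We must make sure that $\mathcal{S}_t$ in the definition of $Z_t$ is evaluated with post-refresh scores from round $t$, which is what Eq.~\eqref{eq:appendix-lazy-carry} guarantees. We must also make sure that the UCB and LCB used are both indexed by $t-1$, consistent with $w_{t-1}(I_t)$. Everything else is immediate.
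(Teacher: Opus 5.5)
Your proposal is correct and follows the paper's proof essentially step for step. On a nonempty-candidate round you take an alive $j\in\mathcal{D}_t\cap\mathcal{S}_t$ and use the carry-over identity and the argmax over refreshed clipped scores to get $\overline U_{t-1}^{+}(I_t)\ge\mu^\star$; the sandwich $\overline U_{t-1}^{+}(I_t)\le U_{t-1}(I_t)$ and the lower confidence bound then give $\Delta_{I_t}\le w_{t-1}(I_t)$, while empty rounds are bounded by $2$ and the gap-dependent version comes from $\Delta_{I_t}=\Delta_{I_t}^2/\Delta_{I_t}$ with optimal plays handled separately, exactly as in the paper.
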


\begin{proof}
Fix a round $t$ with $Z_t=0$, and choose $j\in\mathcal{D}_t\cap\mathcal{S}_t$. Equations~\eqref{eq:appendix-lazy-carry} and~\eqref{eq:appendix-alive-set} give $\overline U_{t-1}^{+}(j)=\overline U_t^{-}(j)\ge\mu^\star$. Since the algorithm maximizes the refreshed clipped score over $\mathcal{D}_t$,
\begin{equation}
    \overline U_{t-1}^{+}(I_t)\ge\overline U_{t-1}^{+}(j)\ge\mu^\star.
    \label{eq:appendix-selected-score-optimistic}
\end{equation}
Combining Eq.~\eqref{eq:appendix-lazy-score-sandwich} for $I_t$ with the lower confidence bound on $\mathcal{E}_\delta$ gives
\begin{equation}
    \Delta_{I_t}
    =\mu^\star-\mu_{I_t}
    \le\overline U_{t-1}^{+}(I_t)-\mu_{I_t}
    \le U_{t-1}(I_t)-L_{t-1}(I_t)
    =w_{t-1}(I_t).
    \label{eq:appendix-one-round-width-bound}
\end{equation}
Because all mean rewards lie in $[-1,1]$, $\Delta_{I_t}\le2$. Therefore, on a nonempty-candidate round ($Z_t=0$), $\Delta_{I_t}\le\min\{2,w_{t-1}(I_t)\}$; on an empty-candidate round ($Z_t=1$), the bound $\Delta_{I_t}\le2$ remains valid. Summing these two cases proves Eq.~\eqref{eq:appendix-gap-free-reduction}. For a suboptimal $I_t$, $\Delta_{I_t}\ge\Delta_{\min}$, so on a nonempty-candidate round,
\begin{equation}
    \Delta_{I_t}
    =\frac{\Delta_{I_t}^2}{\Delta_{I_t}}
    \le
    \frac{\min\{2,w_{t-1}(I_t)\}^2}{\Delta_{\min}}.
    \label{eq:appendix-one-round-squared-width-bound}
\end{equation}
When $I_t$ is optimal, its regret is zero and the squared-width bound also holds. Summing over rounds proves Eq.~\eqref{eq:appendix-gap-dependent-reduction}.
\end{proof}

The two bounds in Lemma~\ref{lem:appendix-regret-reduction} will be used separately below:
Eq.~\eqref{eq:appendix-gap-free-reduction} yields the problem-independent bound in Theorem~\ref{thm:problem-independent-regret},
whereas Eq.~\eqref{eq:appendix-gap-dependent-reduction}, together with the minimum gap $\Delta_{\min}$,
yields the problem-dependent bound in Theorem~\ref{thm:problem-dependent-regret}.

Set $N=K-k$, and let $\mathcal{F}_{t-1}$ denote the history available immediately before $\mathcal{R}_t$ is drawn. This history determines the current scores $U_{t-1}(i)$, including those the algorithm does not evaluate. The next lemma bounds the probability that the challenger set misses a specified subset of inactive actions.

\begin{lemma}[One-step alive-action hit probability]
\label{lem:appendix-one-step-hit}
Suppose $q<N$, and let $\mathcal{G}_t\subseteq[K]\setminus\mathcal{A}_{t-1}$ be an $\mathcal{F}_{t-1}$-measurable set with $|\mathcal{G}_t|=s$. Then, conditionally on $\mathcal{F}_{t-1}$,
\begin{equation}
    \mathbb{P}(\mathcal{R}_t\cap\mathcal{G}_t=\varnothing\mid\mathcal{F}_{t-1})
    =a_s
    :=\frac{\binom{N-s}{q}}{\binom{N}{q}}
    \le\left(1-\frac{q}{N}\right)^s,
    \label{eq:appendix-one-step-miss}
\end{equation}
where $\binom{N-s}{q}=0$ when $N-s<q$.
\end{lemma}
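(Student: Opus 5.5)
Since $\mathcal{G}_t$ is $\mathcal{F}_{t-1}$-measurable and $\mathcal{R}_t$ is a uniformly random $q$-subset of $\mathcal{B}_{t-1}=[K]\setminus\mathcal{A}_{t-1}$, we may condition on $\mathcal{F}_{t-1}$ and treat both $\mathcal{B}_{t-1}$ (of size $N$) and $\mathcal{G}_t\subseteq\mathcal{B}_{t-1}$ (of size $s$) as fixed. The conditional law of $\mathcal{R}_t$ is then uniform over the $\binom{N}{q}$ subsets of size $q$ of $\mathcal{B}_{t-1}$. The event $\mathcal{R}_t\cap\mathcal{G}_t=\varnothing$ holds exactly when $\mathcal{R}_t\subseteq\mathcal{B}_{t-1}\setminus\mathcal{G}_t$, a set of size $N-s$. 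There are $\binom{N-s}{q}$ such subsets, with the convention that this count is zero if $N-s<q$. This gives the equality with $a_s$ directly.

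For the inequality, if $N-s<q$ then $a_s=0$ and the bound is trivial. Otherwise, expand the ratio as a telescoping product. The cleanest route is to write $\mathcal{R}_t$ as the first $q$ elements of a uniformly random ordering of $\mathcal{B}_{t-1}$. Equivalently, by symmetry of the hypergeometric count, use
\[
a_s=\frac{\binom{N-s}{q}}{\binom{N}{q}}=\frac{\binom{N-q}{s}}{\binom{N}{s}}=\prod_{j=0}^{s-1}\frac{N-q-j}{N-j}.
\]
Each factor satisfies $\frac{N-q-j}{N-j}=1-\frac{q}{N-j}\le 1-\frac{q}{N}$, since $0\le j<N$. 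Multiplying the $s$ factors, all of which are nonnegative, yields $a_s\le(1-q/N)^s$.

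The only mildly delicate step is the symmetry identity $\binom{N-s}{q}\binom{N}{s}=\binom{N-q}{s}\binom{N}{q}$. Both sides equal $\frac{N!}{q!\,s!\,(N-q-s)!}$ when $q+s\le N$, which is the case we are in. With that identity, the remaining steps are elementary. For $s=0$ both sides equal $1$. The assumption $q<N$ ensures $1-q/N\in(0,1)$, so the right-hand side is meaningful as a contraction factor. This is the form in which the bound is used later for the supermartingale potential.
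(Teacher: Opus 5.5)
Your proof is correct and follows the paper's argument: count the $q$-subsets of the $N-s$ inactive actions outside $\mathcal{G}_t$ to get the exact ratio, then write it as $\prod_{j=0}^{s-1}\bigl(1-\tfrac{q}{N-j}\bigr)$ and bound each factor by $1-q/N$. Your check of the identity $\binom{N-s}{q}\binom{N}{s}=\binom{N-q}{s}\binom{N}{q}$, and your note that the factors are nonnegative, only spell out algebra the paper leaves implicit.
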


\begin{proof}
Since $\mathcal{R}_t$ is a uniformly sampled $q$-subset of the $N$ inactive actions, the exact miss probability is the ratio in Eq.~\eqref{eq:appendix-one-step-miss}. If $N-s<q$, the bound is immediate. Otherwise,
\begin{equation}
    \frac{\binom{N-s}{q}}{\binom{N}{q}}
    =\prod_{j=0}^{s-1}\frac{N-q-j}{N-j}
    =\prod_{j=0}^{s-1}\left(1-\frac{q}{N-j}\right)
    \le\left(1-\frac{q}{N}\right)^s,
    \label{eq:appendix-hypergeometric-bound}
\end{equation}
with an empty product equal to one.
\end{proof}

We bound the number of empty-candidate rounds using a potential based on the inactive alive actions. The potential accounts for both actions that die upon refresh and those retained in the active set.
The argument uses the fact that clipped scores never increase, so a dead action never becomes alive again and the alive sets are nested as in Eq.~\eqref{eq:appendix-alive-set-nesting}.

\begin{lemma}[Inactive-alive potential]
\label{lem:appendix-inactive-alive-potential}
Suppose $q<N$. Let $p=q/N$ be the probability that a given
inactive action is sampled, and set $\rho=1-p$.
Then
\begin{equation}
    W_t=\rho^{-N_t^{\mathrm{empty}}}m_t,
    \qquad t=0,1,\ldots,T,
    \label{eq:appendix-inactive-alive-potential}
\end{equation}
is a nonnegative supermartingale with $W_0=m_0\le N$.
\end{lemma}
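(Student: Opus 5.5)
The plan is to fix $t$, condition on $\mathcal{F}_{t-1}$, and show $\mathbb{E}[\rho^{-Z_t}m_t\mid\mathcal{F}_{t-1}]\le m_{t-1}$. Since $N_{t-1}^{\mathrm{empty}}$ is $\mathcal{F}_{t-1}$-measurable, this gives $\mathbb{E}[W_t\mid\mathcal{F}_{t-1}]\le W_{t-1}$. Nonnegativity is obvious, and $W_0=m_0\le|[K]\setminus\mathcal{A}_0|=N$.

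Write $G=\mathcal{S}_{t-1}\setminus\mathcal{A}_{t-1}$ for the inactive alive set, $s=m_{t-1}=|G|$, and $X=|\mathcal{R}_t\cap G|$. Because $\mathcal{R}_t$ is a uniform $q$-subset of the $N$ inactive actions, $\mathbb{E}[X\mid\mathcal{F}_{t-1}]=sp$; this is the hit-probability computation behind Lemma~\ref{lem:appendix-one-step-hit}. The crucial observation is that $\mathcal{F}_{t-1}$ determines $U_{t-1}(i)$ for every $i$. Hence the refreshed scores of the active actions $\mathcal{A}_{t-1}$, and in particular the event
\[
E_0=\{\text{no action of }\mathcal{A}_{t-1}\text{ is alive after the round-}t\text{ refresh}\},
\]
are $\mathcal{F}_{t-1}$-measurable. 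I will split on $E_0$.

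\emph{Case $E_0^c$.} Some active action survives the refresh and lies in $\mathcal{D}_t\cap\mathcal{S}_t$, so $Z_t=0$. It remains to show $m_t\le s$ deterministically, via two subcases.
\begin{itemize}
\item If $\mathcal{A}_t\subseteq\mathcal{S}_t$, then by the nesting \eqref{eq:appendix-alive-set-nesting},
\[
m_t=|\mathcal{S}_t|-k\le|\mathcal{S}_{t-1}|-|\mathcal{S}_{t-1}\cap\mathcal{A}_{t-1}|=s.
\]
\item Otherwise $\mathcal{A}_t$ contains a dead action. By the top-$k$ retention rule, every refreshed-alive candidate in $\mathcal{D}_t$ then lies in $\mathcal{A}_t$. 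Hence $\mathcal{S}_t\setminus\mathcal{A}_t\subseteq\mathcal{S}_t\setminus\mathcal{D}_t\subseteq G\setminus\mathcal{R}_t$, so $m_t\le s$.
\end{itemize}

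\emph{Case $E_0$.} Here the only possible alive candidates are the survivors among the $X$ hit actions. I will show that $m_t\le s-X$ on $E_0$, using the same two subcases.
\begin{itemize}
\item If $\mathcal{A}_t$ contains a dead action, then all survivors enter $\mathcal{A}_t$ and the dead hits leave $\mathcal{S}_t$. Thus $\mathcal{S}_t\setminus\mathcal{A}_t\subseteq G\setminus\mathcal{R}_t$, which has size $s-X$.
\item If $\mathcal{A}_t\subseteq\mathcal{S}_t$, then at least $k$ hits survive. All active actions died, so $|\mathcal{S}_t|\le s-(\#\text{dead hits})$. Therefore $m_t=|\mathcal{S}_t|-k\le s-\#\text{dead hits}-\#\text{surviving hits}=s-X$.
\end{itemize}
Using only $Z_t\le1$, this gives $\rho^{-Z_t}m_t\le\rho^{-1}(s-X)$. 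Taking conditional expectations,
\[
\mathbb{E}[\rho^{-Z_t}m_t\mid\mathcal{F}_{t-1}]\le\rho^{-1}(s-sp)=s.
\]

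The main obstacle is exactly this accounting on $E_0$. A cruder bound such as ``$m_t\le s$ always'' fails already for $s=1$: it gives $\rho\cdot\rho^{-1}+p=1+p>1$. The gain of $\rho^{-1}$ on a miss must be paid for by the expected $sp$ reduction from hits. This works only once one notices two facts: empty rounds can occur only on the predictable event $E_0$, and on $E_0$ every hit, whether it dies or is retained, leaves the inactive alive set. I would double-check the top-$k$ tie-breaking in the subcase where survivors outnumber $k$. There I count every hit as leaving, not just the retained ones: the non-retained survivors remain in $\mathcal{S}_t\setminus\mathcal{A}_t$, but the bound $|\mathcal{S}_t|-k$ already absorbs them. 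Note also that the argument does not need $\mathcal{E}_\delta$, consistent with the lemma being stated unconditionally.
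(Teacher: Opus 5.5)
Your reduction to the one-step bound $\mathbb{E}[\rho^{-Z_t}m_t\mid\mathcal{F}_{t-1}]\le m_{t-1}$, the $\mathcal{F}_{t-1}$-measurability of $E_0$, and your treatment of $E_0^c$ are correct; the latter is the paper's Case~1 ($x\ge1$). The gap is the claim $m_t\le s-X$ on $E_0$, in the subcase $\mathcal{A}_t\subseteq\mathcal{S}_t$. Let $h$ and $\ell$ be the numbers of sampled actions of $G$ that survive and die at the refresh, so $X=h+\ell$. On $E_0$ you have $|\mathcal{S}_t|=s-\ell$ exactly, and the top-$k$ rule retains exactly $\min\{k,h\}$ alive candidates. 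Hence $m_t=s-\ell-\min\{k,h\}$, and your step $|\mathcal{S}_t|-k\le s-\ell-h$ requires $h\le k$. When $q>k$ and more than $k$ sampled actions survive, the $h-k$ non-retained survivors stay in $\mathcal{S}_t\setminus\mathcal{A}_t$. The quantity $|\mathcal{S}_t|-k$ does not absorb them; it \emph{equals} $m_t$. Your argument is therefore valid only when $q\le k$ (e.g.\ the $k=q$ experiments). The lemma also covers $q>k$, such as the allocation $(k,q)=(50,450)$.

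Nor can this be patched while keeping the crude bound $\rho^{-Z_t}\le\rho^{-1}$. Take $k=1$, $N=3$, $q=2$ (so $\rho=1/3$), with $E_0$ holding, two inactive alive actions whose current UCB is at least $\mu^\star$, and the third inactive action dead. Then $h\ge1$ surely and $m_t=1$ surely, so $m_t=1>0=s-X$ when both alive actions are drawn. Moreover, $\rho^{-1}\mathbb{E}[m_t\mid\mathcal{F}_{t-1}]=3>2=s$, whereas the true value is $\mathbb{E}[\rho^{-Z_t}m_t\mid\mathcal{F}_{t-1}]=1$ because $Z_t=0$ surely.

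The missing idea is that on $E_0$ one has $Z_t=\mathbf{1}\{h=0\}$. The factor $\rho^{-1}$ is paid only on the miss event, and on $\{h\ge1\}$ you may only count a decrease of one, not one per hit. So split $G$ into the $s_+$ actions with $U_{t-1}(i)\ge\mu^\star$ and the remaining $s-s_+$; both groups are $\mathcal{F}_{t-1}$-measurable. Let $a=\mathbb{P}(h=0\mid\mathcal{F}_{t-1})\le\rho^{s_+}$ by Lemma~\ref{lem:appendix-one-step-hit}. Then use $\rho^{-Z_t}=1+(p/\rho)Z_t$, $\mathbb{E}[\ell\mid\mathcal{F}_{t-1}]=p(s-s_+)$, $\min\{k,h\}\ge\mathbf{1}\{h\ge1\}$, and $\mathbb{E}[Z_tm_t\mid\mathcal{F}_{t-1}]\le s_+a+(s-s_+)\rho$ to obtain
\[
\mathbb{E}[\rho^{-Z_t}m_t\mid\mathcal{F}_{t-1}]\le s-p(s-s_+)-(1-a)+\frac{p}{\rho}\bigl(s_+a+(s-s_+)\rho\bigr)=s-(1-a)+\frac{ps_+a}{\rho}\le s.
\]
The last step is the balance $ps_+a\le ps_+\rho^{s_+}\le\rho(1-\rho^{s_+})\le\rho(1-a)$, which follows from $1-\rho^{s_+}\ge ps_+\rho^{s_+-1}$. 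This is exactly the paper's Case~2.
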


\begin{proof}
Fix a round $t$ and condition on $\mathcal{F}_{t-1}$. Write
$m=m_{t-1}$ for the number of inactive alive actions before
$\mathcal{R}_t$ is sampled. Let $x$ be the number of actions in
$\mathcal{A}_{t-1}$ that remain alive after the round-$t$ refresh.
Since every action in $\mathcal{A}_{t-1}$ is refreshed, $x$ is
determined by $\mathcal{F}_{t-1}$.

Partition the $m$ inactive alive actions into $s$ actions satisfying
$U_{t-1}(i)\ge\mu^\star$ and $r=m-s$ actions satisfying
$U_{t-1}(i)<\mu^\star$. Thus, an action in the first group remains
alive if sampled and refreshed, whereas an action in the second group
becomes dead if sampled and refreshed. Both groups are fixed before
$\mathcal{R}_t$ is sampled. Let $h$ and $\ell$ be their respective
counts in $\mathcal{R}_t$.

The refresh removes exactly $\ell$ of the initially inactive alive
actions. Among the candidates, exactly $x+h$ actions are alive.
Since every alive candidate has clipped score at least $\mu^\star$
and every dead candidate has clipped score below $\mu^\star$, the
top-$k$ update retains exactly $\min\{k,x+h\}$ alive actions.
Therefore,
\begin{equation}
    m_t=m+x-\ell-\min\{k,x+h\},
    \qquad
    Z_t=\mathbf{1}\{x+h=0\}.
    \label{eq:appendix-inactive-alive-update}
\end{equation}

\paragraph{Case 1: $x\ge1$.}
Then $Z_t=0$, and since $\min\{k,x+h\}\ge x$,
Eq.~\eqref{eq:appendix-inactive-alive-update} gives $m_t\le m$.
Hence
\begin{equation}
    \mathbb{E}[\rho^{-Z_t}m_t\mid\mathcal{F}_{t-1}]
    \le m.
    \label{eq:appendix-potential-one-step}
\end{equation}

\paragraph{Case 2: $x=0$.}
In this case $Z_t=\mathbf{1}\{h=0\}$. Write
$a=\mathbb{P}(h=0\mid\mathcal{F}_{t-1})$.
Lemma~\ref{lem:appendix-one-step-hit} gives $a\le\rho^s$. For
$s\ge1$,
\begin{equation}
    psa\le ps\rho^s\le\rho(1-\rho^s)\le\rho(1-a),
    \label{eq:appendix-hit-potential-balance}
\end{equation}
because
$1-\rho^s=p\sum_{j=0}^{s-1}\rho^j\ge ps\rho^{s-1}$.
The same inequality holds for $s=0$, since then $a=1$.

Uniform sampling gives
$\mathbb{E}[\ell\mid\mathcal{F}_{t-1}]=pr$. Moreover,
Eq.~\eqref{eq:appendix-inactive-alive-update} becomes
$m_t=m-\ell-\min\{k,h\}$. Since
$\min\{k,h\}\ge\mathbf{1}\{h\ge1\}$,
\begin{equation}
    \mathbb{E}[m_t\mid\mathcal{F}_{t-1}]
    \le m-pr-(1-a).
    \label{eq:appendix-inactive-alive-drift}
\end{equation}

On $\{Z_t=1\}=\{h=0\}$, all $s$ actions in the first group remain
inactive and alive. For each of the other $r$ actions, contributing
to $Z_tm_t$ requires that it not be sampled, which has probability
$\rho$. Hence,
\begin{equation}
    \mathbb{E}[Z_tm_t\mid\mathcal{F}_{t-1}]
    \le sa+r\rho.
    \label{eq:appendix-empty-alive-moment}
\end{equation}

Since $\rho^{-Z_t}=1+(p/\rho)Z_t$,
Eqs.~\eqref{eq:appendix-hit-potential-balance}--\eqref{eq:appendix-empty-alive-moment}
give
\begin{align}
    \mathbb{E}[\rho^{-Z_t}m_t\mid\mathcal{F}_{t-1}]
    &\le m-pr-(1-a)+\frac{p}{\rho}(sa+r\rho)
    \nonumber\\
    &=m-(1-a)+\frac{ps}{\rho}a
    \le m.
    \label{eq:appendix-potential-drift}
\end{align}
Thus Eq.~\eqref{eq:appendix-potential-one-step} holds in both cases.

Since
$N_t^{\mathrm{empty}}=N_{t-1}^{\mathrm{empty}}+Z_t$
and $\rho^{-N_{t-1}^{\mathrm{empty}}}$ is
$\mathcal{F}_{t-1}$-measurable,
\begin{align}
    \mathbb{E}[W_t\mid\mathcal{F}_{t-1}]
    &=
    \rho^{-N_{t-1}^{\mathrm{empty}}}
    \mathbb{E}[\rho^{-Z_t}m_t\mid\mathcal{F}_{t-1}]
    \nonumber\\
    &\le
    \rho^{-N_{t-1}^{\mathrm{empty}}}m_{t-1}
    =W_{t-1}.
\end{align}
Since $W_t\ge0$, $(W_t)_{t=0}^T$ is therefore a nonnegative
supermartingale. Finally,
$W_0=m_0\le |[K]\setminus\mathcal{A}_0|=N$.
\end{proof}

On $\mathcal{E}_\delta$, an empty-candidate round leaves the optimal action alive and inactive, so $m_t\ge1$ whenever such a round occurs. We use this observation to convert the potential bound into a bound on the total number of empty-candidate rounds.

\begin{lemma}[Number of empty-candidate rounds]
\label{lem:appendix-empty-candidate-bound}
For every $\delta'\in(0,1)$,
\begin{equation}
    \mathbb{P}\!\left(
        \mathcal{E}_\delta
        \cap
        \left\{
            N_T^{\mathrm{empty}}
            >
            \left\lceil
                \frac{K-k}{q}
                \left(\log(K-k)+\log\frac{1}{\delta'}\right)
            \right\rceil
        \right\}
    \right)
    \le\delta'.
    \label{eq:appendix-empty-candidate-high-probability}
\end{equation}
\end{lemma}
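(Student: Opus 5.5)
The plan is to convert the supermartingale of Lemma~\ref{lem:appendix-inactive-alive-potential} into a tail bound on $N_T^{\mathrm{empty}}$ with Ville's maximal inequality, after first disposing of the degenerate case $q=N$.

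\textbf{Case $q=N$.} Here $\mathcal{R}_t=\mathcal{B}_{t-1}$, so $\mathcal{D}_t=[K]$. On $\mathcal{E}_\delta$ we have $i^\star\in\mathcal{S}_t$, hence $Z_t=0$ for every $t$ and $N_T^{\mathrm{empty}}=0$. The event in Eq.~\eqref{eq:appendix-empty-candidate-high-probability} is then empty.

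\textbf{Main case $q<N$.} Set $n=\lceil \frac{N}{q}(\log N+\log\frac1{\delta'})\rceil$. Since $N_T^{\mathrm{empty}}$ is integer valued, it suffices to bound $\mathbb{P}(\mathcal{E}_\delta\cap\{N_T^{\mathrm{empty}}\ge n\})$; this dominates the probability of the strict event in the statement, which only helps.

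\textbf{Lower bound on the potential.} On $\mathcal{E}_\delta\cap\{N_T^{\mathrm{empty}}\ge n\}$, let $\tau_n\le T$ be the round of the $n$th empty-candidate round.
\begin{itemize}
\item At $\tau_n$, $Z_{\tau_n}=1$, so $\mathcal{D}_{\tau_n}\cap\mathcal{S}_{\tau_n}=\varnothing$.
\item On $\mathcal{E}_\delta$, $i^\star\in\mathcal{S}_{\tau_n}$, so $i^\star\notin\mathcal{D}_{\tau_n}$.
\item Since $\mathcal{A}_{\tau_n}\subseteq\mathcal{D}_{\tau_n}$, it follows that $i^\star\in\mathcal{S}_{\tau_n}\setminus\mathcal{A}_{\tau_n}$, so $m_{\tau_n}\ge1$.
\end{itemize}
Hence $W_{\tau_n}=\rho^{-n}m_{\tau_n}\ge\rho^{-n}$, which gives
\[
\mathcal{E}_\delta\cap\{N_T^{\mathrm{empty}}\ge n\}\subseteq\Bigl\{\max_{0\le t\le T}W_t\ge\rho^{-n}\Bigr\}.
\]

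\textbf{Applying Ville's inequality.} By Lemma~\ref{lem:appendix-inactive-alive-potential}, $(W_t)$ is a nonnegative supermartingale with $W_0\le N$. The maximal inequality therefore gives
\[
\mathbb{P}\bigl(\max_{t\le T}W_t\ge\rho^{-n}\bigr)\le \E[W_0]\,\rho^{n}\le N\rho^n.
\]
Since $\rho=1-q/N$, we have $N\rho^n\le N e^{-qn/N}$. The choice of $n$ guarantees $qn/N\ge\log N+\log(1/\delta')$, so $N e^{-qn/N}\le\delta'$.

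\textbf{Main obstacle: measurability of the filtration.} I expect the only delicate step to be checking that $(W_t)$ is adapted to a single filtration for which the maximal inequality applies. Lemma~\ref{lem:appendix-inactive-alive-potential} states the supermartingale property with respect to $\mathcal{F}_{t-1}$, the history before $\mathcal{R}_t$ is drawn. I would define the filtration $\mathcal{G}_t=\mathcal{F}_t$ and check that $W_t$ is $\mathcal{F}_t$-measurable. This holds because $m_t$ and $N_t^{\mathrm{empty}}$ are determined by $\mathcal{R}_1,\dots,\mathcal{R}_t$, the observed rewards, and the current parameters, all of which are in $\mathcal{F}_t$. I would also note that the event $\mathcal{E}_\delta$ is not needed inside the martingale argument: it is used only to ensure $m_{\tau_n}\ge1$, so intersecting with it at the end is legitimate.
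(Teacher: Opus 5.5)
Your proposal is correct and follows the paper's own argument. It handles $q=N$ separately, uses the same threshold $n$, obtains $W_{\tau_n}\ge\rho^{-n}$ from $m_{\tau_n}\ge 1$ on $\mathcal{E}_\delta$, and applies the maximal inequality to the nonnegative supermartingale of Lemma~\ref{lem:appendix-inactive-alive-potential} with $W_0\le N$; your explicit derivation of $i^\star\in\mathcal{S}_{\tau_n}\setminus\mathcal{A}_{\tau_n}$ and your adaptedness check only spell out steps the paper leaves implicit.
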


\begin{proof}
First suppose $q<N$ and fix an integer $n\ge1$. On $\mathcal{E}_\delta\cap\{N_T^{\mathrm{empty}}\ge n\}$, let $\tau_n\le T$ be the $n$th empty-candidate round. Then $m_{\tau_n}\ge1$, giving
\begin{equation}
    W_{\tau_n}
    =\rho^{-n}m_{\tau_n}
    \ge\rho^{-n}.
    \label{eq:appendix-empty-potential-threshold}
\end{equation}
For any nonnegative supermartingale $(W_t)$ and any $a>0$, the
maximal inequality states that
$\mathbb{P}(\max_{0\le t\le T}W_t\ge a)\le\mathbb{E}[W_0]/a$.
Indeed, stopping at the first crossing of $a$ (or at $T$ if no crossing
occurs) gives a stopped value at least $a$ on the crossing event and
nonnegative otherwise, while its expectation is at most $\mathbb{E}[W_0]$.
Here, Eq.~\eqref{eq:appendix-empty-potential-threshold} implies that
$n$ empty-candidate rounds on $\mathcal{E}_\delta$ force such a crossing
with $a=\rho^{-n}$. Since $W_0\le N$, we obtain
\begin{align}
    \mathbb{P}\!\left(\mathcal{E}_\delta\cap\{N_T^{\mathrm{empty}}\ge n\}\right)
    &\le
    \mathbb{P}\!\left(\max_{0\le t\le T}W_t\ge\rho^{-n}\right)
    \nonumber\\
    &\le N\rho^n
    \le N\exp\!\left(-\frac{qn}{N}\right).
    \label{eq:appendix-empty-exponential-tail}
\end{align}
Here, the last inequality uses $\rho=1-q/N$ and $1-x\le e^{-x}$:
$\rho^n=(1-q/N)^n\le e^{-qn/N}$.
Taking
\begin{equation}
    n=\left\lceil\frac{N}{q}\left(\log N+\log\frac{1}{\delta'}\right)\right\rceil
    \label{eq:appendix-empty-threshold}
\end{equation}
makes the right-hand side of Eq.~\eqref{eq:appendix-empty-exponential-tail} at most $\delta'$. This proves the claim for $q<N$. If $q=N$, then $\mathcal{D}_t=[K]$ at every round. On $\mathcal{E}_\delta$, the optimal action is always alive, so $N_T^{\mathrm{empty}}=0$. The bound follows in this case as well.
\end{proof}

Lemma~\ref{lem:appendix-regret-reduction} bounds regret on nonempty-candidate rounds through the confidence widths of the played actions. We now bound these widths using the elliptical potential.

Let
\begin{equation}
    a_t=\lVert x_{I_t}\rVert_{V_{t-1}^{-1}}.
    \label{eq:appendix-elliptical-radius}
\end{equation}
The standard elliptical-potential lemma \citep[Lemma~11]{abbasi2011improved} gives
\begin{equation}
    \sum_{t=1}^T\min\{1,a_t^2\}
    \le
    2\log\frac{\det(V_T)}{\det(V_0)}
    \le
    2d\log\!\left(1+\frac{TL^2}{\lambda d}\right).
    \label{eq:appendix-elliptical-potential}
\end{equation}

The problem-independent bound uses the linear sum of capped confidence widths.

\begin{proof}[Proof of Theorem~\ref{thm:problem-independent-regret}]
Since $w_{t-1}(I_t)=2\beta a_t$,
\begin{equation}
    \min\{2,w_{t-1}(I_t)\}
    \le
    2\max\{\beta,1\}\min\{1,a_t\}.
    \label{eq:appendix-capped-width-linear}
\end{equation}
Summing over $t$ and applying the Cauchy--Schwarz inequality gives
\begin{align}
    \sum_{t=1}^T\min\{2,w_{t-1}(I_t)\}
    &\le
    2\max\{\beta,1\}
    \sum_{t=1}^T\min\{1,a_t\}
    \nonumber\\
    &\le
    2\max\{\beta,1\}
    \sqrt{
        T\sum_{t=1}^T\min\{1,a_t^2\}
    }.
\end{align}
Together with Eq.~\eqref{eq:appendix-elliptical-potential}, this yields
\begin{equation}
    \sum_{t=1}^T\min\{2,w_{t-1}(I_t)\}
    \le
    2\max\{\beta,1\}
    \sqrt{
        2Td\log\!\left(1+\frac{TL^2}{\lambda d}\right)
    }.
    \label{eq:appendix-gap-free-width-sum}
\end{equation}

By Lemma~\ref{lem:appendix-empty-candidate-bound}, on
$\mathcal{E}_\delta$ except on an event of probability at most
$\delta'$,
\begin{equation}
    N_T^{\mathrm{empty}}
    \le
    \left\lceil
        \frac{K-k}{q}
        \left(
            \log(K-k)+\log\frac{1}{\delta'}
        \right)
    \right\rceil .
    \label{eq:appendix-empty-candidate-bound-use}
\end{equation}
Combining Eq.~\eqref{eq:appendix-gap-free-width-sum} and
Eq.~\eqref{eq:appendix-empty-candidate-bound-use} with the
problem-independent regret reduction
Eq.~\eqref{eq:appendix-gap-free-reduction} gives
\begin{align}
    R_T
    &\le
    2\max\{\beta,1\}
    \sqrt{
        2Td\log\!\left(1+\frac{TL^2}{\lambda d}\right)
    }
    \nonumber\\
    &\qquad
    +2
    \left\lceil
        \frac{K-k}{q}
        \left(
            \log(K-k)+\log\frac{1}{\delta'}
        \right)
    \right\rceil .
    \label{eq:appendix-problem-independent-final}
\end{align}

Finally, $\mathbb{P}(\mathcal{E}_\delta^c)\le\delta$, while
Lemma~\ref{lem:appendix-empty-candidate-bound} ensures that
\begin{equation}
    \mathbb{P}\!\left(
        \mathcal{E}_\delta
        \cap
        \left\{
            N_T^{\mathrm{empty}}
            >
            \left\lceil
                \frac{K-k}{q}
                \left(
                    \log(K-k)+\log\frac{1}{\delta'}
                \right)
            \right\rceil
        \right\}
    \right)
    \le\delta'.
\end{equation}
A union bound therefore shows that
Eq.~\eqref{eq:appendix-problem-independent-final} holds with
probability at least $1-\delta-\delta'$, completing the proof.
\end{proof}

The problem-dependent bound instead uses the squared capped widths in
Eq.~\eqref{eq:appendix-gap-dependent-reduction}.

\begin{proof}[Proof of Theorem~\ref{thm:problem-dependent-regret}]
Again using $w_{t-1}(I_t)=2\beta a_t$,
\begin{equation}
    \min\{2,w_{t-1}(I_t)\}^2
    \le
    4\max\{\beta^2,1\}\min\{1,a_t^2\}.
    \label{eq:appendix-capped-width-squared}
\end{equation}
Summing over $t$ and applying
Eq.~\eqref{eq:appendix-elliptical-potential} gives
\begin{align}
    \sum_{t=1}^T\min\{2,w_{t-1}(I_t)\}^2
    &\le
    4\max\{\beta^2,1\}
    \sum_{t=1}^T\min\{1,a_t^2\}
    \nonumber\\
    &\le
    8\max\{\beta^2,1\}d
    \log\!\left(1+\frac{TL^2}{\lambda d}\right).
    \label{eq:appendix-gap-dependent-width-sum}
\end{align}

On $\mathcal{E}_\delta$, whenever the bound in
Lemma~\ref{lem:appendix-empty-candidate-bound} holds, combining
Eq.~\eqref{eq:appendix-gap-dependent-width-sum} with the
problem-dependent regret reduction
Eq.~\eqref{eq:appendix-gap-dependent-reduction} yields
\begin{align}
    R_T
    &\le
    \frac{1}{\Delta_{\min}}
    \sum_{t=1}^T
    \min\{2,w_{t-1}(I_t)\}^2
    +2N_T^{\mathrm{empty}}
    \nonumber\\
    &\le
    \frac{
        8\max\{\beta^2,1\}d
    }{\Delta_{\min}}
    \log\!\left(1+\frac{TL^2}{\lambda d}\right)
    \nonumber\\
    &\qquad
    +2
    \left\lceil
        \frac{K-k}{q}
        \left(
            \log(K-k)+\log\frac{1}{\delta'}
        \right)
    \right\rceil .
    \label{eq:appendix-problem-dependent-final}
\end{align}

As above, $\mathbb{P}(\mathcal{E}_\delta^c)\le\delta$, and
Lemma~\ref{lem:appendix-empty-candidate-bound} gives
\begin{equation}
    \mathbb{P}\!\left(
        \mathcal{E}_\delta
        \cap
        \left\{
            N_T^{\mathrm{empty}}
            >
            \left\lceil
                \frac{K-k}{q}
                \left(
                    \log(K-k)+\log\frac{1}{\delta'}
                \right)
            \right\rceil
        \right\}
    \right)
    \le\delta'.
\end{equation}
Hence, by a union bound,
Eq.~\eqref{eq:appendix-problem-dependent-final} holds with
probability at least $1-\delta-\delta'$, completing the proof.
\end{proof}

\section{Proof of the Access-Induced Regret Lower Bound}
\label{sec:appendix-uniform-challenger-lower-bound}

The lower bound follows from the waiting time until the initially inactive optimal action first enters the challenger set. Before this time, the candidate-access restriction prevents the policy from selecting the optimal action, regardless of its knowledge of the reward model.

\begin{proof}[Proof of Theorem~\ref{thm:uniform-gating-lower-bound}]
Set $N=K-k$. Recall that $\tau$ denotes the first round in which the initially inactive optimal action $i^\star$ appears in the challenger set. Since the next active set is chosen from the current candidate set, $i^\star$ remains inactive until round $\tau$. Conditional on $\mathcal{F}_{t-1}$ and $\{\tau>t-1\}$, it is therefore one of the $N$ inactive actions. Uniform sampling without replacement gives
\begin{equation}
    \mathbb{P}(\tau>t\mid\mathcal{F}_{t-1},\tau>t-1)
    =1-\frac{q}{N}.
    \label{eq:appendix-optimal-conditional-miss}
\end{equation}
Iterating Eq.~\eqref{eq:appendix-optimal-conditional-miss} yields
\begin{equation}
    \mathbb{P}(\tau>t)=\left(1-\frac{q}{N}\right)^t,
    \qquad t\ge0.
    \label{eq:appendix-optimal-geometric-tail}
\end{equation}

By the definition of the access-induced regret,
\begin{equation}
    R_T^{\mathrm{access}}
    =
    \Delta\sum_{t=1}^T\mathbf{1}\{\tau>t\}.
    \label{eq:appendix-access-regret}
\end{equation}
Moreover, Eq.~\eqref{eq:total-vs-access-regret} gives $R_T\ge R_T^{\mathrm{access}}$ pathwise. Taking expectations and using Eq.~\eqref{eq:appendix-optimal-geometric-tail},
\begin{align}
    \E[R_T]
    &\ge
    \E[R_T^{\mathrm{access}}]
    =
    \Delta\sum_{t=1}^T\mathbb{P}(\tau>t)
    \nonumber\\
    &=
    \Delta\sum_{t=1}^T\left(1-\frac{q}{N}\right)^t
    =
    \Delta\frac{N-q}{q}
    \left[1-\left(1-\frac{q}{N}\right)^T\right].
    \label{eq:appendix-exact-lower-bound}
\end{align}
Substituting $N=K-k$ establishes Eq.~\eqref{eq:exact-access-lower-bound}.

For the second claim, let $p=q/N$. If $q\le N/2$, then $p\le1/2$ and
\begin{equation}
    \frac{1-p}{p}\ge\frac{1}{2p},
    \qquad
    1-(1-p)^T\ge1-e^{-pT}.
    \label{eq:appendix-lower-bound-elementary-steps}
\end{equation}
Using $1-e^{-x}\ge(1-e^{-1})\min\{x,1\}$ for $x\ge0$,
\begin{align}
    \E[R_T^{\mathrm{access}}]
    &=
    \Delta\frac{1-p}{p}\left[1-(1-p)^T\right]
    \nonumber\\
    &\ge
    \frac{1-e^{-1}}{2}\,
    \Delta
    \min\left\{T,\frac{1}{p}\right\}
    \nonumber\\
    &=
    \frac{1-e^{-1}}{2}\,
    \Delta
    \min\left\{T,\frac{N}{q}\right\}.
    \label{eq:appendix-order-lower-bound}
\end{align}
Substituting $N=K-k$ proves Eq.~\eqref{eq:order-access-lower-bound} and completes the proof.
\end{proof}

\section{Experimental Details}
\label{sec:experimental-details}

\subsection{Data Construction}
\label{app:data_construction}

Synthetic instances follow the construction in Section~\ref{subsec:synthetic-experiments}. For the semi-synthetic benchmarks, we construct $d=128$ features from the full action set without using source scores and fit a separate reward parameter for each dataset.

\paragraph{MovieLens-20M.} Each action is a movie represented by its 1,128 Tag Genome relevance scores. We center the feature matrix and retain the first 128 principal components without whitening. We divide every feature vector by the maximum row norm of the full matrix, preserving their relative lengths. A user's observed movie ratings provide the source labels. We rank users by the number of ratings on Tag Genome movies and sample ten from the top 100 without replacement using seed 12345. The experiments use the first five sampled users: 12131, 91193, 27469, 50297, and 52009.

\paragraph{UCI SGEMM.} Each action is a GPU kernel configuration with 14 tuning parameters. We apply a base-two logarithm to the first ten parameters and standardize all 14 columns. The feature map contains these coordinates, their squares, all pairwise products, and nine products $z_jz_{j+1}z_{j+2}$ for $j=1,\ldots,9$. We standardize the resulting 128 columns and divide all rows by their maximum norm. The source score is the negative natural logarithm of the mean of four recorded runtimes for the GTX 680 task with matrix size 2048.

\paragraph{NAS-Bench-101.} Each action is an architecture evaluated on CIFAR-10 for 108 epochs. Features encode operations at each vertex, directed edges with endpoint operations, and input-to-output paths. Each token group is weighted by the inverse square root of its token count and mapped by a deterministic signed hash into 127 coordinates. One additional coordinate records graph size. Each feature vector is normalized to unit norm. The source score is the final validation accuracy averaged over three training repeats.

\paragraph{DOCKSTRING.} Each action is a molecule represented by a radius-two sparse Morgan count fingerprint computed from its SMILES representation. We apply a deterministic signed hash into 128 coordinates and normalize each feature vector to unit norm. We use PPARD, ABL1, ADAM17, ADRB1, and ADRB2, the first five targets in the source table. Source labels are negative docking scores; only finite labels enter regression, while all molecules remain available actions.

\paragraph{Licenses.} MovieLens-20M is distributed by GroupLens under its usage license for non-commercial research use. UCI SGEMM is licensed under CC BY 4.0. NAS-Bench-101 and DOCKSTRING are released under the Apache License 2.0.

\paragraph{Reward parameters.} For each dataset, let $\mathcal{J}$ be the labelled actions used for fitting, and let $\widetilde y_i$ denote the corresponding source scores standardized using this subset. We solve
\begin{equation}
    (\widehat b,\widehat v)
    \in\arg\min_{b,v}
    \left\{\sum_{i\in\mathcal{J}}(\widetilde y_i-b-x_i^\top v)^2+\lVert v\rVert_2^2\right\},
    \qquad
    \theta^\star=\frac{\widehat v}{\lVert\widehat v\rVert_2}.
\end{equation}
The offline ridge penalty is 1, and the intercept is unpenalized. Online means are $\mu_i=x_i^\top\theta^\star$, with independent Gaussian reward noise of standard deviation $0.05$. The bandit environments use all available labels for fitting.

\begin{table}[ht]
\centering
\caption{Held-out linearity diagnostics for the semi-synthetic action features. $R^2$ measures value prediction, and Spearman's $\rho$ measures ranking agreement. Entries with multiple instances report the mean $\pm$ sample standard deviation over five users or targets.}
\label{tab:semi-synthetic-linearity}
\footnotesize
\setlength{\tabcolsep}{6pt}
\renewcommand{\arraystretch}{1.08}
\begin{tabular}{@{}l r c c@{}}
\toprule
Dataset & $K$ & Held-out $R^2$ & Held-out $\rho$ \\
\midrule
MovieLens-20M & $10{,}381$ & $0.412\pm0.082$ & $0.638\pm0.069$ \\
UCI SGEMM & $241{,}600$ & $0.850$ & $0.886$ \\
NAS-Bench-101 & $423{,}624$ & $0.256$ & $0.749$ \\
DOCKSTRING & $260{,}155$ & $0.405\pm0.029$ & $0.627\pm0.031$ \\
\bottomrule
\end{tabular}
\end{table}

\paragraph{Linearity diagnostic.} We fit the ridge model on a random $20\%$ of the labelled actions and evaluate the unnormalized predictor, including its intercept, on the remaining $80\%$. Table~\ref{tab:semi-synthetic-linearity} reports the held-out diagnostics.

\subsection{Method Parameters}
\label{app:method_parameters}

All methods use the same bandit-level parameters specified below. For method-specific hyperparameters, we use fixed representative configurations rather than tuning each method separately for individual datasets or experimental settings. We follow the same principle for \textsc{PC-LinUCB}, whose candidate budget is fixed within each experimental setting. The exact configurations used in our experiments are reported below.

We set $\lambda=0.2$ and use the same fixed exploration coefficient from Eq.~\eqref{eq:confidence-radius} across all methods. With $R=0.05$, $L=S=1$, and $\delta=1/T$, this coefficient is approximately $1.0790$ for synthetic $d=16$, $1.9260$ for synthetic $d=100$, and $1.9242$ for the semi-synthetic benchmarks. This shared coefficient is used with each method's own confidence-score construction.

\paragraph{PC-LinUCB and RS-OFUL.} \textsc{PC-LinUCB} uses the candidate budgets specified in the main text. RS-OFUL recomputes its action when the design-matrix determinant increases by a factor greater than two; it updates its sufficient statistics after every observation.

\paragraph{CBSCFD and PSI-LinUCB.} CBSCFD uses sketch size $m=4$ for $d=16$ and $m=20$ for $d\in\{100,128\}$. PSI-LinUCB uses rank $r=4$ and $r=16$, respectively, in the shared-parameter online setting: one global state is updated after each selected action and reward. Both methods score the full action set using their compressed matrix representations.

\paragraph{Sub-Elim.} We use an HNSW shortlist backend to approximately retrieve the actions required by the Sub-Elim search step, with shortlist size 30, search parameter $\mathrm{ef}=100$, graph connectivity $M=16$, and construction parameter $\mathrm{ef}_{\mathrm{construction}}=200$. The confidence multiplier equals the shared exploration coefficient, $\eta=1/\sqrt{T}$, and the maximum stage is 8 for $T=20{,}000$ and 9 for $T=100{,}000$. The initial threshold is $-1$ in all settings.

\subsection{Spectral Bandit Application}
\label{app:spectral-experiment}

We construct a weighted Erd\H{o}s--R\'enyi graph with $K=10{,}000$ nodes and edge probability $0.03$, with edge weights sampled independently from $\mathrm{Uniform}[0,1)$. We use the $J=200$ lowest-frequency eigenvectors of the unnormalized graph Laplacian as the reduced spectral representation. The mean-reward function is generated from the first five spectral components, whose coefficients are sampled independently from $\mathcal{N}(0,1)$ and rescaled so that the maximum absolute mean reward is one. Online rewards include independent Gaussian noise with standard deviation $0.01$.

SpectralUCB uses spectral regularization $\lambda=0.01$ and the effective-dimension confidence construction, with $R=0.01$, $\delta=0.001$, and the exact smoothness norm of the synthetic reward function. Each repeat independently generates the graph and reward function, while all configurations within a repeat share the graph, spectral representation, reward function, and reward-noise sequence. Timing excludes the common graph construction and eigendecomposition.

\subsection{Timing}
\label{app:timing}

\paragraph{Paired runs.} All experiments use master seed 12345. Each synthetic configuration uses ten independently generated instances. For the semi-synthetic experiments, the five repeats correspond to five different MovieLens users or DOCKSTRING targets, and five repetitions of the fixed SGEMM and NAS-Bench-101 environments. Within each repeat, all methods share the same action features, reward parameter, mean rewards, and round-wise Gaussian noise sequence. Instance generation, initial candidate selection, noise generation, and method-specific randomness use separate random streams derived from the common seed.

\paragraph{Timing.} All methods run serially on the same server equipped with an AMD Ryzen Threadripper 7960X CPU (24 cores, 48 threads) and 251\,GiB of memory, with BLAS restricted to one thread. Their execution order rotates cyclically by repetition index. We measure elapsed wall-clock time using a monotonic high-resolution timer. The timed interval starts before method initialization and ends after the final online round. It includes model initialization, candidate-pool and clipped-score construction, method-specific search-index construction, action selection, pseudo-regret bookkeeping, reward observation, and model updates. Dataset loading, feature construction, reward-parameter fitting, noise generation, numerical-library warmup, initial active-set sampling, and output serialization occur outside the timed interval.

\paragraph{Reported statistics.} Cumulative pseudo-regret curves report the mean over repeats with bands of $\pm1$ standard error (SE), where the SE is the sample standard deviation divided by $\sqrt{n}$. Speedup is the ratio of mean LinUCB wall time to mean method wall time. For paired measurements $x_i$ and $y_i$ and ratio $r=\bar{x}/\bar{y}$, we estimate its SE as
\begin{equation}
    \operatorname{SE}(r)
    =
    \frac{
        \operatorname{SD}\!\left((x_i-r y_i)/\bar{y}\right)
    }{\sqrt{n}},
\end{equation}
where $\operatorname{SD}$ denotes the sample standard deviation. For speedup, $x_i$ is the LinUCB wall time and $y_i$ is the method wall time. For the ablations, the reported final-regret change is
\begin{equation}
    100\left(
        \frac{\bar{R}_{\mathrm{PC}}}{\bar{R}_{\mathrm{LinUCB}}}
        -1
    \right).
\end{equation}
Writing $r_R=\bar{R}_{\mathrm{PC}}/\bar{R}_{\mathrm{LinUCB}}$, its SE is
\begin{equation}
    100
    \frac{
        \operatorname{SD}\!\left(
        (R_{\mathrm{PC},i}-r_R R_{\mathrm{LinUCB},i})
        /\bar{R}_{\mathrm{LinUCB}}
        \right)
    }{\sqrt{n}}.
\end{equation}
All plotted error bars represent $\pm1$ SE. The held-out diagnostics in Table~\ref{tab:semi-synthetic-linearity} instead report sample standard deviations across users or targets.

\subsection{Detailed Timing and Speedup Results}

\label{app:detailed-speedup}

Tables~\ref{tab:timing-synthetic-d16}--\ref{tab:timing-spectral} report the absolute wall-clock times and speedups for all experiments. Initialization and online times are means across repeats; total times and speedups are reported as mean $\pm$ one standard error and ratio of means $\pm$ one paired standard error, respectively, using the definitions in Appendix~\ref{app:timing}. Initialization is measured in milliseconds, while online and total times are measured in seconds. Initialization time is the difference between the recorded total and online times. The online interval includes action selection, reward observation, regret bookkeeping, and model updates; these operations were not timed separately. Speedups are relative to LinUCB, or SpectralUCB for the spectral experiment. Shared data preparation is excluded. Each ablation configuration uses its own paired LinUCB runs. The main comparisons and ablations were timed in separate batches; each speedup uses the paired baseline from the same batch.

The spectral experiment additionally records a shared preparation time of $7.987\pm0.097$ seconds (mean $\pm$ one standard error over ten repeats), covering graph construction, eigendecomposition, feature and reward construction, noise generation, and initial-permutation generation. This shared cost is excluded from Table~\ref{tab:timing-spectral}; its individual components were not timed separately.

\begin{table}[htbp]
    \centering
    \small
    \setlength{\tabcolsep}{4pt}
    \caption{Synthetic comparison with $K=10{,}000$, $d=16$, $T=20{,}000$, and ten paired repeats. \textsc{PC-LinUCB} uses $k=q=250$.}
    \label{tab:timing-synthetic-d16}
    \begin{tabular}{lrrrr}
        \toprule
        Method & Init. (ms) & Online (s) & Total (s) & Speedup \\
        \midrule
        \textsc{PC-LinUCB} & 0.305 & 0.895 & $0.895 \pm 0.003$ & $5.07 \pm 0.02$ \\
        LinUCB & 0.040 & 4.540 & $4.540 \pm 0.007$ & $1.00 \pm 0.00$ \\
        RS-OFUL & 0.049 & 0.150 & $0.150 \pm 0.001$ & $30.29 \pm 0.22$ \\
        CBSCFD & 0.114 & 4.213 & $4.213 \pm 0.009$ & $1.08 \pm 0.00$ \\
        PSI-LinUCB & 0.130 & 5.401 & $5.401 \pm 0.018$ & $0.84 \pm 0.00$ \\
        Sub-Elim & 16.221 & 2.569 & $2.585 \pm 0.045$ & $1.76 \pm 0.03$ \\
        \bottomrule
    \end{tabular}
\end{table}

\begin{table}[htbp]
    \centering
    \small
    \setlength{\tabcolsep}{4pt}
    \caption{Synthetic comparison with $K=10{,}000$, $d=100$, $T=100{,}000$, and ten paired repeats. \textsc{PC-LinUCB} uses $k=q=250$.}
    \label{tab:timing-synthetic-d100}
    \begin{tabular}{lrrrr}
        \toprule
        Method & Init. (ms) & Online (s) & Total (s) & Speedup \\
        \midrule
        \textsc{PC-LinUCB} & 0.440 & 15.011 & $15.012 \pm 0.017$ & $15.50 \pm 0.16$ \\
        LinUCB & 0.042 & 232.675 & $232.675 \pm 2.313$ & $1.00 \pm 0.00$ \\
        RS-OFUL & 0.049 & 3.221 & $3.221 \pm 0.045$ & $72.25 \pm 0.76$ \\
        CBSCFD & 0.241 & 137.570 & $137.570 \pm 1.048$ & $1.69 \pm 0.01$ \\
        PSI-LinUCB & 0.224 & 142.663 & $142.663 \pm 1.940$ & $1.63 \pm 0.01$ \\
        Sub-Elim & 14.643 & 136.877 & $136.892 \pm 6.257$ & $1.70 \pm 0.07$ \\
        \bottomrule
    \end{tabular}
\end{table}

\begin{table}[htbp]
    \centering
    \small
    \setlength{\tabcolsep}{4pt}
    \caption{MovieLens-20M comparison with $K=10{,}381$, $d=128$, $T=20{,}000$, and five paired repeats. \textsc{PC-LinUCB} uses $k=q=104$.}
    \label{tab:timing-movielens20m_tag_genome}
    \begin{tabular}{lrrrr}
        \toprule
        Method & Init. (ms) & Online (s) & Total (s) & Speedup \\
        \midrule
        \textsc{PC-LinUCB} & 0.585 & 2.562 & $2.562 \pm 0.010$ & $38.66 \pm 0.11$ \\
        LinUCB & 0.037 & 99.057 & $99.057 \pm 0.175$ & $1.00 \pm 0.00$ \\
        RS-OFUL & 0.036 & 3.800 & $3.800 \pm 0.143$ & $26.07 \pm 1.01$ \\
        CBSCFD & 0.614 & 37.280 & $37.281 \pm 0.151$ & $2.66 \pm 0.01$ \\
        PSI-LinUCB & 0.582 & 42.502 & $42.502 \pm 0.573$ & $2.33 \pm 0.03$ \\
        Sub-Elim & 15.432 & 77.560 & $77.575 \pm 0.389$ & $1.28 \pm 0.01$ \\
        \bottomrule
    \end{tabular}
\end{table}

\begin{table}[htbp]
    \centering
    \small
    \setlength{\tabcolsep}{4pt}
    \caption{UCI SGEMM comparison with $K=241{,}600$, $d=128$, $T=20{,}000$, and five paired repeats. \textsc{PC-LinUCB} uses $k=q=2416$.}
    \label{tab:timing-uci_sgemm}
    \begin{tabular}{lrrrr}
        \toprule
        Method & Init. (ms) & Online (s) & Total (s) & Speedup \\
        \midrule
        \textsc{PC-LinUCB} & 15.488 & 42.573 & $42.588 \pm 0.940$ & $74.74 \pm 1.28$ \\
        LinUCB & 0.087 & 3182.931 & $3182.931 \pm 19.509$ & $1.00 \pm 0.00$ \\
        RS-OFUL & 0.091 & 138.903 & $138.903 \pm 0.989$ & $22.91 \pm 0.10$ \\
        CBSCFD & 16.804 & 914.613 & $914.630 \pm 9.855$ & $3.48 \pm 0.02$ \\
        PSI-LinUCB & 16.747 & 1096.383 & $1096.399 \pm 37.921$ & $2.90 \pm 0.09$ \\
        Sub-Elim & 373.000 & 1232.161 & $1232.534 \pm 5.321$ & $2.58 \pm 0.02$ \\
        \bottomrule
    \end{tabular}
\end{table}

\begin{table}[htbp]
    \centering
    \small
    \setlength{\tabcolsep}{4pt}
    \caption{NAS-Bench-101 comparison with $K=423{,}624$, $d=128$, $T=20{,}000$, and five paired repeats. \textsc{PC-LinUCB} uses $k=q=4237$.}
    \label{tab:timing-nasbench101}
    \begin{tabular}{lrrrr}
        \toprule
        Method & Init. (ms) & Online (s) & Total (s) & Speedup \\
        \midrule
        \textsc{PC-LinUCB} & 25.772 & 84.216 & $84.242 \pm 4.709$ & $66.05 \pm 3.94$ \\
        LinUCB & 0.127 & 5564.294 & $5564.294 \pm 34.611$ & $1.00 \pm 0.00$ \\
        RS-OFUL & 0.118 & 225.524 & $225.524 \pm 0.612$ & $24.67 \pm 0.18$ \\
        CBSCFD & 29.144 & 1556.777 & $1556.806 \pm 10.746$ & $3.57 \pm 0.02$ \\
        PSI-LinUCB & 28.680 & 1848.407 & $1848.436 \pm 37.106$ & $3.01 \pm 0.04$ \\
        Sub-Elim & 638.478 & 3459.211 & $3459.849 \pm 7.794$ & $1.61 \pm 0.01$ \\
        \bottomrule
    \end{tabular}
\end{table}

\begin{table}[htbp]
    \centering
    \small
    \setlength{\tabcolsep}{4pt}
    \caption{DOCKSTRING comparison with $K=260{,}155$, $d=128$, $T=20{,}000$, and five paired repeats. \textsc{PC-LinUCB} uses $k=q=2602$.}
    \label{tab:timing-dockstring}
    \begin{tabular}{lrrrr}
        \toprule
        Method & Init. (ms) & Online (s) & Total (s) & Speedup \\
        \midrule
        \textsc{PC-LinUCB} & 15.972 & 43.211 & $43.227 \pm 0.156$ & $77.92 \pm 0.42$ \\
        LinUCB & 0.091 & 3368.120 & $3368.120 \pm 10.575$ & $1.00 \pm 0.00$ \\
        RS-OFUL & 0.082 & 138.386 & $138.386 \pm 0.426$ & $24.34 \pm 0.08$ \\
        CBSCFD & 17.620 & 953.688 & $953.706 \pm 5.976$ & $3.53 \pm 0.03$ \\
        PSI-LinUCB & 17.721 & 1100.410 & $1100.428 \pm 17.285$ & $3.06 \pm 0.05$ \\
        Sub-Elim & 394.129 & 2516.168 & $2516.562 \pm 4.617$ & $1.34 \pm 0.00$ \\
        \bottomrule
    \end{tabular}
\end{table}

\begin{table}[htbp]
    \centering
    \small
    \setlength{\tabcolsep}{4pt}
    \caption{Candidate-budget ablation with $K=10{,}000$, $d=16$, $T=20{,}000$, and ten paired repeats per configuration. \textsc{PC-LinUCB} uses $k=q=B/2$.}
    \label{tab:timing-03_budget_ablation-d16}
    \begin{tabular}{llrrrr}
        \toprule
        Configuration & Method & Init. (ms) & Online (s) & Total (s) & Speedup \\
        \midrule
        $B/K=1\%$ & \textsc{PC-LinUCB} & 0.260 & 0.555 & $0.555 \pm 0.001$ & $8.23 \pm 0.03$ \\
        $B/K=1\%$ & LinUCB & 0.018 & 4.570 & $4.570 \pm 0.013$ & $1.00 \pm 0.00$ \\
        \addlinespace
        $B/K=2\%$ & \textsc{PC-LinUCB} & 0.265 & 0.639 & $0.639 \pm 0.001$ & $7.10 \pm 0.02$ \\
        $B/K=2\%$ & LinUCB & 0.017 & 4.537 & $4.537 \pm 0.009$ & $1.00 \pm 0.00$ \\
        \addlinespace
        $B/K=5\%$ & \textsc{PC-LinUCB} & 0.259 & 0.868 & $0.868 \pm 0.002$ & $5.23 \pm 0.01$ \\
        $B/K=5\%$ & LinUCB & 0.017 & 4.539 & $4.539 \pm 0.005$ & $1.00 \pm 0.00$ \\
        \addlinespace
        $B/K=10\%$ & \textsc{PC-LinUCB} & 0.256 & 1.242 & $1.242 \pm 0.006$ & $3.63 \pm 0.02$ \\
        $B/K=10\%$ & LinUCB & 0.019 & 4.511 & $4.511 \pm 0.006$ & $1.00 \pm 0.00$ \\
        \addlinespace
        $B/K=20\%$ & \textsc{PC-LinUCB} & 0.259 & 1.928 & $1.928 \pm 0.008$ & $2.35 \pm 0.01$ \\
        $B/K=20\%$ & LinUCB & 0.018 & 4.536 & $4.536 \pm 0.007$ & $1.00 \pm 0.00$ \\
        \bottomrule
    \end{tabular}
\end{table}

\begin{table}[htbp]
    \centering
    \small
    \setlength{\tabcolsep}{4pt}
    \caption{Candidate-budget ablation with $K=10{,}000$, $d=100$, $T=100{,}000$, and ten paired repeats per configuration. \textsc{PC-LinUCB} uses $k=q=B/2$.}
    \label{tab:timing-03_budget_ablation-d100}
    \begin{tabular}{llrrrr}
        \toprule
        Configuration & Method & Init. (ms) & Online (s) & Total (s) & Speedup \\
        \midrule
        $B/K=1\%$ & \textsc{PC-LinUCB} & 0.572 & 7.253 & $7.254 \pm 0.013$ & $49.18 \pm 0.10$ \\
        $B/K=1\%$ & LinUCB & 0.024 & 356.727 & $356.727 \pm 0.246$ & $1.00 \pm 0.00$ \\
        \addlinespace
        $B/K=2\%$ & \textsc{PC-LinUCB} & 0.572 & 11.969 & $11.969 \pm 0.024$ & $30.21 \pm 0.11$ \\
        $B/K=2\%$ & LinUCB & 0.023 & 361.569 & $361.570 \pm 1.784$ & $1.00 \pm 0.00$ \\
        \addlinespace
        $B/K=5\%$ & \textsc{PC-LinUCB} & 0.565 & 23.762 & $23.762 \pm 0.077$ & $15.16 \pm 0.07$ \\
        $B/K=5\%$ & LinUCB & 0.025 & 360.317 & $360.317 \pm 0.654$ & $1.00 \pm 0.00$ \\
        \addlinespace
        $B/K=10\%$ & \textsc{PC-LinUCB} & 0.677 & 43.872 & $43.873 \pm 0.099$ & $8.37 \pm 0.03$ \\
        $B/K=10\%$ & LinUCB & 0.031 & 367.107 & $367.107 \pm 1.999$ & $1.00 \pm 0.00$ \\
        \addlinespace
        $B/K=20\%$ & \textsc{PC-LinUCB} & 0.644 & 83.311 & $83.312 \pm 0.308$ & $4.47 \pm 0.02$ \\
        $B/K=20\%$ & LinUCB & 0.028 & 372.615 & $372.615 \pm 2.229$ & $1.00 \pm 0.00$ \\
        \bottomrule
    \end{tabular}
\end{table}

\begin{table}[htbp]
    \centering
    \small
    \setlength{\tabcolsep}{4pt}
    \caption{Budget-allocation ablation with $K=10{,}000$, $d=16$, $T=20{,}000$, and ten paired repeats per configuration. \textsc{PC-LinUCB} uses $B=k+q=500$.}
    \label{tab:timing-04_allocation_ablation-d16}
    \begin{tabular}{llrrrr}
        \toprule
        Configuration & Method & Init. (ms) & Online (s) & Total (s) & Speedup \\
        \midrule
        $(k,q)=(50,450)$ & \textsc{PC-LinUCB} & 0.269 & 0.891 & $0.891 \pm 0.002$ & $5.16 \pm 0.01$ \\
        $(k,q)=(50,450)$ & LinUCB & 0.021 & 4.601 & $4.601 \pm 0.010$ & $1.00 \pm 0.00$ \\
        \addlinespace
        $(k,q)=(125,375)$ & \textsc{PC-LinUCB} & 0.272 & 0.879 & $0.879 \pm 0.003$ & $5.23 \pm 0.02$ \\
        $(k,q)=(125,375)$ & LinUCB & 0.021 & 4.601 & $4.601 \pm 0.011$ & $1.00 \pm 0.00$ \\
        \addlinespace
        $(k,q)=(250,250)$ & \textsc{PC-LinUCB} & 0.266 & 0.869 & $0.870 \pm 0.001$ & $5.21 \pm 0.01$ \\
        $(k,q)=(250,250)$ & LinUCB & 0.021 & 4.534 & $4.534 \pm 0.005$ & $1.00 \pm 0.00$ \\
        \addlinespace
        $(k,q)=(375,125)$ & \textsc{PC-LinUCB} & 0.273 & 0.858 & $0.859 \pm 0.004$ & $5.30 \pm 0.02$ \\
        $(k,q)=(375,125)$ & LinUCB & 0.022 & 4.552 & $4.552 \pm 0.004$ & $1.00 \pm 0.00$ \\
        \addlinespace
        $(k,q)=(450,50)$ & \textsc{PC-LinUCB} & 0.287 & 0.834 & $0.835 \pm 0.003$ & $5.46 \pm 0.02$ \\
        $(k,q)=(450,50)$ & LinUCB & 0.023 & 4.553 & $4.553 \pm 0.005$ & $1.00 \pm 0.00$ \\
        \bottomrule
    \end{tabular}
\end{table}

\begin{table}[htbp]
    \centering
    \small
    \setlength{\tabcolsep}{4pt}
    \caption{Budget-allocation ablation with $K=10{,}000$, $d=100$, $T=100{,}000$, and ten paired repeats per configuration. \textsc{PC-LinUCB} uses $B=k+q=500$.}
    \label{tab:timing-04_allocation_ablation-d100}
    \begin{tabular}{llrrrr}
        \toprule
        Configuration & Method & Init. (ms) & Online (s) & Total (s) & Speedup \\
        \midrule
        $(k,q)=(50,450)$ & \textsc{PC-LinUCB} & 0.814 & 24.487 & $24.488 \pm 0.055$ & $15.60 \pm 0.09$ \\
        $(k,q)=(50,450)$ & LinUCB & 0.032 & 381.932 & $381.932 \pm 2.024$ & $1.00 \pm 0.00$ \\
        \addlinespace
        $(k,q)=(125,375)$ & \textsc{PC-LinUCB} & 0.609 & 23.652 & $23.653 \pm 0.131$ & $15.06 \pm 0.06$ \\
        $(k,q)=(125,375)$ & LinUCB & 0.029 & 356.097 & $356.097 \pm 2.725$ & $1.00 \pm 0.00$ \\
        \addlinespace
        $(k,q)=(250,250)$ & \textsc{PC-LinUCB} & 0.649 & 23.742 & $23.743 \pm 0.231$ & $15.27 \pm 0.15$ \\
        $(k,q)=(250,250)$ & LinUCB & 0.028 & 362.568 & $362.568 \pm 4.355$ & $1.00 \pm 0.00$ \\
        \addlinespace
        $(k,q)=(375,125)$ & \textsc{PC-LinUCB} & 0.601 & 23.396 & $23.396 \pm 0.179$ & $15.23 \pm 0.07$ \\
        $(k,q)=(375,125)$ & LinUCB & 0.027 & 356.435 & $356.435 \pm 2.755$ & $1.00 \pm 0.00$ \\
        \addlinespace
        $(k,q)=(450,50)$ & \textsc{PC-LinUCB} & 0.525 & 22.124 & $22.124 \pm 0.024$ & $15.44 \pm 0.05$ \\
        $(k,q)=(450,50)$ & LinUCB & 0.022 & 341.670 & $341.670 \pm 1.197$ & $1.00 \pm 0.00$ \\
        \bottomrule
    \end{tabular}
\end{table}

\begin{table}[htbp]
    \centering
    \small
    \setlength{\tabcolsep}{4pt}
    \caption{Spectral comparison with $K=10{,}000$, $J=200$, $T=50{,}000$, and ten paired repeats. PC-SpectralUCB uses $k=q=50$, $250$, and $500$ at budgets of $1\%$, $5\%$, and $10\%$, respectively.}
    \label{tab:timing-spectral}
    \begin{tabular}{lrrrr}
        \toprule
        Method & Init. (ms) & Online (s) & Total (s) & Speedup \\
        \midrule
        SpectralUCB & 0.050 & 736.438 & $736.438 \pm 2.478$ & $1.00 \pm 0.00$ \\
        PC-SpectralUCB (1\%) & 1.247 & 10.490 & $10.491 \pm 0.039$ & $70.20 \pm 0.30$ \\
        PC-SpectralUCB (5\%) & 1.301 & 35.650 & $35.652 \pm 0.072$ & $20.66 \pm 0.08$ \\
        PC-SpectralUCB (10\%) & 1.294 & 67.213 & $67.215 \pm 0.072$ & $10.96 \pm 0.04$ \\
        \bottomrule
    \end{tabular}
\end{table}

\end{document}